\definecolor{mypink1}{rgb}{0.858, 0.188, 0.478}
\definecolor{mypink2}{RGB}{219, 48, 122}
\definecolor{mypink3}{cmyk}{0, 0.7808, 0.4429, 0.1412}
\definecolor{mygray}{gray}{0.6}
\definecolor{mycolor8}{rgb}{0, 0, 1}
\definecolor{mycolor7}{rgb}{0.15, 0.15, 0.9}
\definecolor{mycolor6}{rgb}{0.3, 0.3, 0.75}
\definecolor{mycolor5}{rgb}{0.45, 0.45, 0.6}
\definecolor{mycolor4}{rgb}{0.6, 0.6, 0.45}
\definecolor{mycolor3}{rgb}{0.75, 0.75, 0.3}
\definecolor{mycolor2}{rgb}{0.9, 0.9, 0.15}
\definecolor{mycolor1}{rgb}{1, 1, 0}
\definecolor{mycolor8_}{rgb}{1, 0, 0}
\definecolor{mycolor7_}{rgb}{0.9, 0.15, 0.15}
\definecolor{mycolor6_}{rgb}{0.75, 0.3, 0.3}
\definecolor{mycolor5_}{rgb}{0.6, 0.45, 0.45}
\definecolor{mycolor4_}{rgb}{0.45, 0.6, 0.6}
\definecolor{mycolor3_}{rgb}{0.3, 0.75, 0.75}
\definecolor{mycolor2_}{rgb}{0.15, 0.9, 0.9}
\definecolor{mycolor1_}{rgb}{0, 1, 1}
\definecolor{mycolor8__}{rgb}{0, 1, 0}
\definecolor{mycolor7__}{rgb}{0.15, 0.9, 0.15}
\definecolor{mycolor6__}{rgb}{0.3, 0.75, 0.3}
\definecolor{mycolor5__}{rgb}{0.45, 0.6, 0.45}
\definecolor{mycolor4__}{rgb}{0.6, 0.45, 0.6}
\definecolor{mycolor3__}{rgb}{0.75, 0.3, 0.75}
\definecolor{mycolor2__}{rgb}{0.9, 0.15, 0.9}
\definecolor{mycolor1__}{rgb}{1, 0, 1}
\theoremstyle{plain}
\newtheorem{theorem}{Theorem}[section]
\newtheorem{proposition}[theorem]{Proposition}
\newtheorem{lemma}[theorem]{Lemma}
\newtheorem{corollary}[theorem]{Corollary}
\theoremstyle{definition}
\newtheorem{definition}[theorem]{Definition}
\newtheorem{assumption}[theorem]{Assumption}
\theoremstyle{remark}
\newtheorem{remark}[theorem]{Remark}
\newtheorem{example}[theorem]{Example}
\newcommand{\R}{\mathbb{R}}
\newcommand{\N}{\mathbb{N}}
\newcommand{\E}{\mathbb{E}}
\newcommand{\Ell}{\mathcal{L}}
\newcommand{\Prob}{\mathcal{P}}
\newcommand{\Id}{\mathcal{I}}
\DeclareMathOperator{\support}{supp}
\DeclareMathOperator*{\argmin}{arg\,min}
\DeclareMathOperator{\erf}{erf}
\begin{document}

\twocolumn[
\affoltitle{A Framework for Overparameterized Learning}



\affolsetsymbol{equal}{*}

\begin{affolauthorlist}
\affolauthor{D\'avid Terj\'ek}{rrr}
\affolauthor{Diego Gonz\'alez-S\'anchez}{rrr}
\end{affolauthorlist}

\affolaffiliation{rrr}{Alfr\'ed R\'enyi Institute of Mathematics, Budapest, Hungary}

\affolcorrespondingauthor{D\'avid Terj\'ek}{dterjek@renyi.hu}
\affolcorrespondingauthor{Diego Gonz\'alez-S\'anchez}{diegogs@renyi.hu}

\affolkeywords{}

\vskip 0.3in
]



\printAffiliationsAndNotice{}  






\begin{abstract}
A candidate explanation of the good empirical performance of deep neural networks is the implicit regularization effect of first order optimization methods. Inspired by this, we prove a convergence theorem for nonconvex composite optimization, and apply it to a general learning problem covering many machine learning applications, including supervised learning. We then present a deep multilayer perceptron model and prove that, when sufficiently wide, it $(i)$ leads to the convergence of gradient descent to a global optimum with a linear rate, $(ii)$ benefits from the implicit regularization effect of gradient descent, $(iii)$ is subject to novel bounds on the generalization error, $(iv)$ exhibits the lazy training phenomenon and $(v)$ enjoys learning rate transfer across different widths. The corresponding coefficients, such as the convergence rate, improve as width is further increased, and depend on the even order moments of the data generating distribution up to an order depending on the number of layers. The only non-mild assumption we make is the concentration of the smallest eigenvalue of the neural tangent kernel at initialization away from zero, which has been shown to hold for a number of less general models in contemporary works. We present empirical evidence supporting this assumption as well as our theoretical claims.
\end{abstract}

\section{Introduction} \label{introduction}
Explaining the success of highly overparameterized models such as deep neural networks is a central problem in the theory of modern machine learning \citep{Belkin2021}. Classical theory would imply that such models are prone to overfit to the training data. On the contrary, practice shows that while this can happen at moderate overparameterization, around the interpolation threshold where a model is just expressive enough to perfectly fit the data, further increasing model capacity leads to better generalization performance. This so-called double descent phenomenon \citep{Belkinetal2019} is often attributed to the implicit regularization effect of gradient descent and its variants, which are widely used for the training of deep neural networks. Some theoretical works \citep{Oymaketal2019a, Liuetal2022} have been proposing that optimization problems in modern machine learning enjoy the so-called Polyak-{\L}ojasiewicz (PL) condition \citep{Polyak1963}. Together with the Lipschitz gradient (LG) condition, they imply linear convergence of gradient descent to a global optimum. Moreover, by implicit regularization, the particular optimum the algorithm converges to is one that is close to the initial point, a property which could potentially explain why models trained in such a manner enjoy excellent generalization performance. The PL condition is closely linked to the smallest eigenvalue of the neural tangent kernel (NTK) \citep{Jacotetal2018, Liuetal2022}. The concentration of the smallest eigenvalue of the NTK at initialization has been the subject of many works \citep{Montanarietal2020, Nguyenetal2021, Wangetal2021, Bombarietal2022}. The so-called lazy training phenomenon \citep{Chizatetal2019}, which is when a model behaves similarly to its linearization, is responsible for the smallest eigenvalue to stay positive along the optimization path, which has been shown to lead to global convergence.

In this work, we propose a prototype optimization problem covering a range of machine learning applications including supervised learning, formulating them as instances of nonconvex composite optimization problems. We then prove a convergence theorem for nonconvex composite optimization generalizing the classical work of \citet{Polyak1963}, inspired by the more recent results of \citet{Oymaketal2019a} and \citet{Liuetal2022}. Applying it to the prototype problem we naturally arrive at conditions concerning the smallest eigenvalue of the NTK, as well as bounds on the network Jacobian and the Lipschitz constant of the Jacobian mapping. We then propose a general multilayer perceptron (MLP) model and show that, if sufficiently wide, then with high probability with respect to sampling both the initial parameters (from the prior) and the dataset (from the data generating distribution), the conditions of our theorem hold, and therefore we have convergence at a linear rate to a global optimum that is close to initialization. Our MLP formulation enables us to derive a bound on the Lipschitz constant of the trained model, leading to bounds on the generalization error. The only non-mild assumption that we make is that for sufficient width, the smallest eigenvalue of the NTK at initialization is bounded away from $0$ with high probability, which has been shown to hold for less general MLPs \citep{Montanarietal2020, Nguyenetal2021, Wangetal2021, Bombarietal2022}. Our MLP formulation leads naturally to lazy training, which allows us to only require the concentration of the smallest eigenvalue of the NTK at initialization, since it does not change too much during training. Moreover, as width is further increased, the probability of these events increases as well, the convergence ratio improves, the implicit regularization effect gets stronger and training ``gets lazier'' (leading to the NTK staying constant during training in the infinitely wide limit). A novel insight is that the corresponding coefficients depend on the even order moments of the data generating distribution, up to order $2(J-1)$ for $J$ layers (or $J-1$ hidden layers). An additional benefit of our MLP formulation is that it allows learning rate transfer across models of different width, similarly to the Maximal Update Paremeterization of \citet{Yangetal2022}.

After concluding Section~\ref{introduction} with listing our contributions in Subsection~\ref{contributions}, we introduce some notation and definitions in Section~\ref{preliminaries}. In Section~\ref{composite} we propose our convergence theorem for nonconvex composite optimization problems. Then in Section~\ref{main}, we present the prototype problem in Subsection~\ref{prototype} along with requirements on its components needed to ensure that the conditions of our convergence theorem hold, the MLP model satisfying the requirements in Subsection~\ref{mlp}, our convergence theorem for overparameterized learning in Subsection~\ref{convergence} and our theorem on generalization bounds in Subsection~\ref{generalization}. We follow by experiments supporting our theory in Section~\ref{experiments}, then review related work in Section~\ref{related} and discuss the limitations of our work in Section~\ref{limitations} along with future directions. Throughout the paper, we refer to the appendices for rigorous proofs of our results, examples of machine learning applications covered by the prototype problem, and experimental details.

\subsection{Contributions} \label{contributions}
\begin{itemize}
\item A convergence theorem of independent interest for nonconvex composite optimization.
\item A prototype optimization problem covering many machine learning applications with requirements for the components that ensure global convergence of gradient descent.
\item A deep multilayer perceptron model that, when sufficiently wide, satisfies the requirements with high probability, with the only non-mild assumption being that the smallest eigenvalue of the NTK is bounded away from $0$ at initialization with high probability.
\item The corresponding convergence theorem covering many machine learning applications (including supervised learning with losses satisfying the LG and PL conditions), exhibiting both global convergence with a linear rate, implicit regularization, lazy training and learning rate transfer.
\item A theorem bounding the generalization error of the trained model.
\item Experimental results supporting the NTK assumption and our theoretical results.
\end{itemize}

\section{Preliminaries}\label{preliminaries}

In this paper, $G$ and $H$ will always denote Hilbert spaces. Given $x \in G$ and $R>0$ we denote by $B(x,R)$ (resp., $\overline{B}(x,R)$) the open (resp., closed) ball with radius $R$ centered at $x$. The space of bounded linear operators from $G$ to $H$ is denoted $\Ell(G,H)$, and we equip it with the operator norm. The Frobenius norm of matrices is denoted $\Vert \cdot \Vert_F$, while the infinity and Lipschitz norms of functions are denoted $\Vert \cdot \Vert_\infty$ and $\Vert \cdot \Vert_L$, respectively. For a finite dimensional linear operator $A$ we denote its smallest and largest eigenvalues by $\lambda_{\min}(A)$ and $\lambda_{\max}(A)$, respectively. The adjoint of a linear operator $A \in \Ell(G,H)$ is the unique linear operator $A^* \in \Ell(H,G)$ such that $\langle Ax, y \rangle = \langle x, A^*y \rangle$ for all $x \in G$ and $y \in H$. Given a function $F:G\to H$ we say that it is differentiable if it is Fr\'echet differentiable, i.e., if there exists a bounded linear operator $\partial F(x) \in \Ell(G,H)$, which we refer to as the Jacobian of $F$ at $x$, satisfying $\lim_{y \to x }\frac{\Vert F(y) - F(x) - \partial F(x) (y - x) \Vert}{\Vert y - x \Vert}=0$. When $H=\R$ and $f:G\to \R$ is differentiable, we denote $\partial f (x) = \nabla f(x) \in G$ and refer to it as the gradient. Given any $f:G\to \R$ we denote its infimum by $f_*:=\inf_{x\in G}f(x)$.

\section{Nonconvex Composite Optimization} \label{composite}

The goal of this section is to prove sufficient conditions on a pair of functions $f:H\to \R$ and $F:G\to H$ in such a way that an infimum of $(f\circ F) : G \to \R$ can be found using gradient descent. We start by defining a number of conditions concerning $f$ and $F$.

\begin{definition}[Lipschitz Jacobian (LJ) and Lipschitz Gradient (LG) conditions]\label{def_lj_lg}
Let $G,H$ be Hilbert spaces, let $D \subset G$ and let $F : G \to H$ be a  differentiable function. Let $L \geq 0$ be a constant. We say that $F$ is $L$-LJ on $D$ if for all $x, y \in D$, one has $\Vert \partial F(x) - \partial F(y) \Vert \leq L \Vert x - y \Vert$. If $H=\R$ we denote this condition by Lipschitz Gradient (LG).
\end{definition}

\begin{definition}[Polyak-{\L}ojasiewicz (PL) condition]\label{def_pl}
Let $H$ be a Hilbert space, let $D \subset H$ and let $f : H \to \R$ be a differentiable function with $f_* \in \R$. Let $\lambda>0$ be a constant. We say that $f$ is $\lambda$-PL on $D$ if for all $x \in D$, one has $\frac{1}{2}\Vert \nabla f(x) \Vert^2 \geq \lambda (f(x) - f_*)$.
\end{definition}

\begin{definition}[Bounded Jacobian (BJ) and Bounded Gradient (BG) conditions]\label{def_bj_bg}
Let $G,H$ be Hilbert spaces, let $D \subset G$ and let $F : G \to H$ be a  differentiable function. Let $K \geq 0$ be a constant. We say that $F$ is $K$-BJ on $D$ if for all $x \in D$, one has $\Vert \partial F(x) \Vert \leq K$. If $H=\R$ we denote this condition by Bounded Gradient (BG).
\end{definition}

For the last condition recall that given a linear operator $A \in \mathcal{L}(H,H)$ we say that it is $\lambda$-coercive for some $\lambda>0$ if for all $y\in H$ one has $\langle y,Ay\rangle \ge \lambda\|y\|^2$. If $H$ is finite dimensional, this is equivalent to $\lambda_{\min}(A) \geq \lambda$. The following is a generalization of \citet[Definition~3]{Liuetal2022}.

\begin{definition}[Uniform Conditioning (UC)]
Let $G,H$ be Hilbert spaces, let $D \subset H$ and let $F : G \to H$ be a differentiable function. We say that $F$ is $\lambda$-UC on $D$ for some $\lambda>0$ if $\partial F(x) {\partial F(x)}^* \in \Ell(H,H)$ is $\lambda$-coercive for all $x \in D$.
\end{definition}

We are now ready to state our theorem about the convergence of gradient descent on $(f \circ F)$. The proofs of all the results in this section can be found in Appendix~\ref{app:composite}.
\begin{theorem} \label{theorem_gd_f_circ_F}
Let $G,H$ be Hilbert spaces, let $D \subset G$ be bounded and let $x_0 \in D$. Let $K_F,L_F,\lambda_F,L_f,\lambda_f \ge 0$ be constants such that $\lambda_F \leq K_F^2$ and $\lambda_f \leq L_f$. And let $F : G \to H$ be $K_F$-BJ and $L_F$-LJ on $D$ and $f : H \to \R$ be $L_f$-LG and $\lambda_f$-PL on $H$. Let us define:
\begin{itemize}
    \item $K_f = \sqrt{2 L_f (f(F(x_0)) - f_*)}$,
    \item $K=K_F K_f$,
    \item $L = K_F^2 L_f + K_f L_F$,
    \item $\lambda = \lambda_F \lambda_f$ (so that $\lambda\le L$),
    \item $\alpha \in \left(0,\frac{2}{L}\right)$,
    \item $q = 1 + L \lambda \alpha^2 - 2 \lambda \alpha$ (so that $q \in (0,1)$) and
    \item $R=\frac{\alpha K}{1 - \sqrt{q}}$.
\end{itemize}
Define GD starting at $x_0$ as $x_i = x_{i-1} - \alpha \nabla (f \circ F)(x_{i-1})$ for $i \geq 1$. If $\overline{B}(x_0,R) \subset D$ and $F$ is $\lambda_F$-UC on $\overline{B}(x_0,R)$,
then for all $i \ge 0$
\[
(f \circ F) (x_i) - {f}_* 
\leq q^i ((f \circ F) (x_0) - {f}_*).
\]
And the sequence $x_i$ converges to some $x_* = \lim_{i \to \infty} x_i$ such that
\[
(f \circ F)(x_*) = {f}_*
\]
(so that in particular $\inf_{h\in H}f(h)=\inf_{x\in G}(f \circ F)(x)$) and $x_* \in \overline{B}(x_0,R)$ so that
\[
\Vert x_* - x_0 \Vert \leq R.
\]
Moreover, if $\hat{x}_* = \argmin_{x \in G : (f \circ F)(x) = f_*}{ \Vert x - x_0 \Vert }$, i.e., if $\hat{x}_* \in G$ is an optimum of $(f \circ F)$ that is closest to $x_0$, then one has
\[
\Vert x_* - x_0 \Vert 
\leq \frac{\alpha K_F^2 L_f \Vert \hat{x}_* - x_0 \Vert}{1 - \sqrt{q}}.
\]
\end{theorem}

This theorem generalizes the classical result of \citet{Polyak1963}, which corresponds to the case $G=H$ and $F$ being the identity mapping. The optimal learning rate is $\alpha=\frac{1}{L}$, giving the convergence rate $q=1-\frac{\lambda}{L}$. There are two main ideas. The first is that the BJ and LJ conditions on $F$ and the BG and LG conditions on $f$ together ensure that the composition $(f \circ F)$ is LG, and we can bound the BG constant of $f$ along the optimization trajectory. The second is that the UC condition on $F$ and the PL condition on $f$ lead to the composition almost satisfying the PL condition, with $f_*$ in place of ${(f \circ F)}_*$, but when this happens on a large enough set, the two infimums become equal, so that the composition satisfies exactly the PL condition.

Lazy training is when $F$ behaves similarly to its Taylor expansion \citep{Chizatetal2019}, which happens exactly if its LJ constant is small. In particular, a $0$-LJ function is affine. The following lemma can exploit lazy training to turn coercivity of $\partial F(x_0) {\partial F(x_0)}^*$ into UC on a ball.
\begin{lemma}\label{lemma_lazy_training}
Let $G,H$ be Hilbert spaces and let $x_0 \in G$. Let $K_F,L_F,\lambda_0,R \ge 0$ be constants such that $\lambda_0 \leq K_F^2$. And let $F : G \to H$ be $K_F$-BJ and $L_F$-LJ on $B(x_0,R)$ and let $\partial F(x_0) {\partial F(x_0)}^*$ be $\lambda_0$-coercive. Define $\lambda_F = \lambda_0 - 2 K_F L_F R$. One then has that if $\lambda_F>0$, then $F$ is $\lambda_F$-UC on $B(x_0,R)$.
\end{lemma}

Finally, we can bound the initial loss value $f(F(x_0))$ (to get a bound on $K_f$) via the following lemma (if we know that $F(x_0)$ is in some ball around the origin).
\begin{lemma}\label{lemma_initial_loss_value}
Let $H$ be a Hilbert space, $f : H \to \R$ an $L_f$-LG function and let $R>0$. One then has
\[
f(x) \leq (L_f R + \Vert \nabla f(0) \Vert) R + f(0)
\]
for any $x \in \overline{B}(0,R) \subset H$.
\end{lemma}

\section{Overparameterized Learning}\label{main}

\subsection{Prototype Problem}\label{prototype}

In order to define the prototype problem we need several definitions that we introduce in the sequel. One is the \emph{dataset} $\mu \in \Prob(X)$ represented by a probability measure on a Borel space $X$. Another is the \emph{neural network mapping} $N : X \times \Theta \to \R^l$ with $X$ being the \emph{input space}, the Hilbert space $\Theta$ being the \emph{parameter space}, $\R^l$ being the \emph{output space}, and $N(x,\theta)$ being measurable in $x$ and differentiable in $\theta$ for all $ (x,\theta) \in X \times \Theta$. Assuming that for all $ \theta \in \Theta$ the integral $\int \Vert N(x,\theta) \Vert^2 d\mu(x)$ exists and is finite, we consider the \emph{induced mapping} $N_\mu : \Theta \to L^2(\mu,\R^l)$ defined as $N_\mu(\theta)$ being the equivalence class of the function $N(\cdot,\theta) : X \to \R^l$ with respect to $\mu$ for any $\theta \in \Theta$. The Hilbert space $L^2(\mu,\R^l)$ of equivalence classes of square integrable $\R^l$-valued functions with respect to $\mu$ is the \emph{feature space} with norm $\Vert f \Vert = \sqrt{\int \Vert f(x) \Vert^2 d\mu(x)}$ for $f \in L^2(\mu,\R^l)$. The last component is the \emph{integrand} $\iota : X \times \R^l \to \R$, mapping an input $x \in X$ and an output $z \in \R^l$ to a loss value $\iota(x,z) \in \R$, with $\iota(x,z)$ being measurable in $x$ and differentiable in $z$ for all $(x,z) \in X \times \R^l$. We assume that for all $x \in X$, $\iota(x,\cdot) : \R^l \to \R$ satisfies the LG and PL conditions globally with constants $L_\Ell$ and $\lambda_\Ell$, respectively, $\iota(x,\cdot)_*=\iota_* \in \R$ for all $x \in X$, and that the induced integral functional \citep{Rockafellar1976} $\Ell_\mu : L^2(\mu,\R^l) \to \R$, called the \emph{loss functional} and defined as $\Ell_\mu(f) = \int \iota(x,f(x)) d\mu(x)$ for $f \in L^2(\mu,\R^l)$, is finite for all $f \in L^2(\mu,\R^l)$.

\begin{definition}[Prototype problem]
With the assumptions above, the \emph{prototype problem} consists on finding an optimal parameter $\theta_*$ that attains
\[
\min_{\theta \in \Theta}{ (\Ell_\mu \circ N_\mu)(\theta) }.
\]
\end{definition}

This problem formulation enables us to translate not only supervised learning problems in general, but variational autoencoders, and even gradient regularized discriminators for generative adversarial networks to particular instances of the prototype problem as shown in Appendix~\ref{app:prototype}. The lemma below shows that the LG and PL properties of the integrand are inherited by the loss functional. Proofs of the results in this subsection can be found in Appendix~\ref{app:prototype}.

\begin{lemma}\label{lem:cond-iota}
With the above assumptions and for all $f \in L^2(\mu,\R^l)$, one has that
\[
\nabla \Ell_\mu(f)(x) = \nabla_z \iota(x,f(x))
\]
for $\mu$-a.e. $x \in X$ and $\Ell_\mu$ is $L_\Ell$-LG and $\lambda_\Ell$-PL globally.
\end{lemma}

Supervised learning problems with LG and PL losses can be defined by such integrands. Let $Y$ be a target Borel space, $t : X \to Y$ a measurable target function and $\ell : Y \times \R^l \to \R$ a loss function. Then $\iota : X \times \R^l \to \R$ defined as $\iota(x,z) = \ell(t(x),z)$ is measurable in $x$ and differentiable in $z$ for all $(x,z) \in X \times \R^l$, while if $\ell(y,\cdot)$ is $L_\Ell$-LG and $\lambda_\Ell$-PL for all $y \in Y$ with ${\ell(y,\cdot)}_* = \iota_*$ then clearly $\iota(x,\cdot)$ is $L_\Ell$-LG and $\lambda_\Ell$-PL for all $x \in X$ with ${\iota(x,\cdot)}_* = \iota_*$. Together with Lemma~\ref{lem:cond-iota}, this shows that supervised learning loss functionals defined as $\Ell_\mu(f) = \int \ell(f(x),t(x)) d\mu(x)$ inherit the LG and PL conditions from $\ell$. Examples of such losses are the least squares loss $\ell(z,y) = \frac{1}{2}\Vert z - y \Vert^2$ for $z \in \R^l$ and $y \in Y=\R^l$ with $t \in L^2(\mu,\R^l)$, which is $1$-LG and $1$-PL, as well as the regularized classification loss $\ell(z,y) = \log \sum e^z - z_y + \frac{\lambda}{2}\Vert z \Vert^2$ for $\lambda>0$, $z \in \R^l$ and $y \in [1:l]$, which is $(1+\lambda)$-LG and $\lambda$-PL.

In order to apply Theorem~\ref{theorem_gd_f_circ_F} to the prototype problem, some assumptions on the induced mapping $N_\mu$ need to hold. The lemma below provides sufficient conditions for the BJ and LJ properties of the neural network mapping $N$ for the induced mapping $N_\mu$ to inherit them.

\begin{lemma}\label{lem:inheritance-gj-lj-main}
Let $X$ be a Borel space, $\Theta$ a Hilbert space, $D \subset \Theta$ open, $\mu \in \Prob(X)$ and $N : X \times \Theta \to \R^l$ such that $N(x,\theta)$ is measurable in $x$ and Fr\'echet differentiable in $\theta$ for all $(x,\theta) \in X \times \Theta$ and the integral $\int \Vert N(x,\theta) \Vert^2 d\mu(x)$ exists and is finite for all $\theta \in \Theta$. Suppose that there exists $\hat{K}_N, \hat{L}_N : X \to \R$ with $K_N = \sqrt{\int \hat{K}_N^2 d\mu}$ and $L_N = \sqrt{\int \hat{L}_N^2 d\mu}$ both finite such that for $\mu$-almost every $x\in X$, $N(x,\cdot) : \Theta \to \R^l$ is $\hat{K}_N(x)$-BJ and $\hat{L}_N(x)$-LJ on $D$. 

Then $N_\mu : \Theta \to L^2(\mu,\R^l)$ is differentiable on $D$. For all $\theta \in D$, the Jacobian $\partial N_\mu(\theta) \in \Ell(\Theta,L^2(\mu,\R^l))$ is given by
\begin{equation}\label{eq:jacobian-of-mu} \partial N_\mu(\theta) \eta = \partial_\theta N(\cdot,\theta) \eta
\end{equation}
for all $\eta \in \Theta$. The adjoint Jacobian ${\partial N_\mu(\theta)}^* \in \Ell(L^2(\mu,\R^l),\Theta)$ is given by
\[
{\partial N_\mu(\theta)}^* f = \int {\partial_\theta N(x,\theta)}^* f(x) d\mu(x)
\]
for all $f \in L^2(\mu,\R^l)$. And $N_\mu$ is $K_N$-BJ and $L_N$-LJ on $D$.
\end{lemma}

Note that the self-adjoint operator $\partial N_\mu(\theta) {\partial N_\mu(\theta)}^* \in \Ell(L^2(\mu,\R^l),L^2(\mu,\R^l))$ on the feature space $L^2(\mu,\R^l)$ given by
\[
\partial N_\mu(\theta) {\partial N_\mu(\theta)}^* f = \int \partial_\theta N(\cdot,\theta) {\partial_\theta N(x,\theta)}^* f(x) d\mu(x)
\]
for $f \in L^2(\mu,\R^l)$ is exactly the NTK at $\theta$. The lemma below characterizes its block matrix representation over \emph{empirical measures}, i.e., $\mu=\frac{1}{d}\sum_{i=1}^d \delta_{x_i}$.

\begin{lemma}\label{lemma_ntk_block_matrix}
If the dataset $\mu = \frac{1}{d} \sum_{i=1}^d \delta_{x_i}$ is an empirical measure of samples $\{x_1,\cdots,x_d\} \subset X$, then $L^2(\mu,l) \cong \R^{dl}$ and the NTK operator $\partial N_\mu(\theta) {\partial N_\mu(\theta)}^* \in \Ell(L^2(\mu,l),L^2(\mu,l))$ has the block matrix representation
\[
\left[\frac{1}{d} \partial_\theta N(x_i,\theta) {\partial_\theta N(x_j,\theta)}^* : i,j \in [1:d]\right] \in \R^{dl \times dl}.
\]
\end{lemma}

The induced mapping $N_\mu$ being $\lambda_N$-UC and $K_N$-BJ means that the spectrum of the NTK is contained in $[\lambda_N, K_N^2]$. In the case of finite data, i.e., $\vert \support(\mu) \vert = d \in N_+$, one has that $\dim(L^2(\mu,\R^l)) = dl$, and the NTK being $\lambda_N$-coercive reduces to its matrix representation being positive definite with its smallest eigenvalue bounded from below by $\lambda_N$. In the case of infinite data, $\lambda_N$-coercivity is a stronger condition than positive definiteness, since the descending eigenvalues of a positive definite operator can converge to $0$, not having a uniform lower bound. In any case, having $\dim(\Theta) \geq \dim(L^2(\mu,\R^l))$, i.e., overparameterization, is a necessary condition for $N_\mu$ being $\lambda_N$-UC with $\lambda_N>0$. For an initial parameter $\theta_0 \in \Theta$, we need this condition to hold on a ball around $\theta_0$, but by Lemma~\ref{lemma_lazy_training} if the LJ constant of $N_\mu$ is sufficiently small it is enough to have coercivity at the initial point (i.e., lazy training can be exploited).

\subsection{Multilayer Perceptron}\label{mlp}

We will now introduce a multilayer perceptron (MLP) that, for sufficiently wide hidden layers, satisfies the hypotheses of Lemma~\ref{lem:inheritance-gj-lj-main} with large probability. Let $J>1$, $X=\R^k$ and $\Theta = \prod_{i=1}^J \R^{m_{i-1} \times m_i} \times \R^{m_i}$ with $m_0=k$, $m_J=l$ and $m_i = \gamma_i m$ for $i \in [1:J-1]$ for $m \in \N_+$ and some fixed $\{\gamma_1,\cdots,\gamma_{J-1}\} \subset \N_+$, equipped with the norm $\|\theta\|=\sqrt{\|A_1\|_F^2+\|b_1\|^2+\cdots+\|A_J\|_F^2+\|b_J\|^2}$ for any $\theta = \theta_{1 : J} = (A_1,b_1,\cdots,A_J,b_J) \in \Theta$. Let $\phi : \R \to \R$ be differentiable with $\Vert \phi' \Vert_\infty$ and $\Vert \phi' \Vert_L$ both finite and $|\phi(x)| \leq \vert x \vert$.\footnote{Note that standard smooth rectifiers including GELU, Softplus$-\log(2)$, SiLU/Swish and Mish satisfy this.} Given any $x \in X$ and $\theta \in \Theta$, let a $J$-layer MLP $N=N_J : \R^k \times \Theta \to \R^l$ be defined recursively as \[N_1(x,\theta_1) = A_1 x + b_1\] and
\[
N_i(x,\theta_{1:i}) = A_i \phi\left(\frac{1}{\sqrt{m}} N_{i-1}(x,\theta_{1 : i-1})\right)+b_i
\]
for $i \in [2,J]$. Let $\theta_0 = (A_{0,1},b_{0,1},\ldots,A_{0,J},b_{0,J}) \in \Theta$ be a random vector with each coordinate distributed according to a standard normal $\mathcal{N}(0,1)$, except for those of $A_{0,J}$ distributed according to $\mathcal{N}(0,\frac{1}{m})$. In the following, the $O(\cdot)$ and $\Omega(\cdot)$ notations are understood for sufficiently large $m$.

\begin{remark}
Note that this is (up to a constant factor) equivalent to the standard LeCun or He initialization. However, including the factor $\sqrt{m}$ in the architecture rather than in the initialization (except for the last layer) makes an important difference when training, as the updates with gradient descent then are automatically ``of the right order''. This will ultimately lead to lazy training and learning rate transfer. 
\end{remark}

For most applications in machine learning, it is usually assumed that the dataset follows some distribution $\nu \in \Prob(\R^k)$ and we have some samples of it $x_1,\ldots,x_d\in \R^k$ with which we form the empirical measure $\mu:=\frac{1}{d}\sum_{i=1}^d \delta_{x_i}$. We assume (as commonly done in the literature, see \citep[Assumption~2.2]{Nguyenetal2021} and \citep[Assumption~2]{Bombarietal2022} and note that the two-sided bound follows from the one-sided bound bellow) that $\nu$ satisfies the \emph{Lipschitz (or Gaussian) concentration property}, meaning that for an absolute constant $c_\nu>0$, any $t>0$ and any Lipschitz continuous $g : \R^k \to \R$, one has with probability at most $e^{-\frac{c_\nu t^2}{\Vert g \Vert_L^2}}$ that $g(x) - \int g d\nu \geq t$. We additionally assume that the moments
\[
M_{\nu,i} = \int \Vert \cdot \Vert^i d\nu
\]
of $\nu$ are finite for $i \in [1,2(J-1)]$. The following lemma shows that with high probability, in a neighborhood of the initial parameter, the BJ and LJ constants of the map $N_\mu : \Theta \to L^2(\mu,\R^l)$ are controlled by the even order moments of the data generating distribution $\nu$ with high probability, with the concentration getting stronger as we sample more data.\footnote{A variant of the result, without reference to a data generating distribution, is Theorem~\ref{theorem_mlp}.} In particular, the order of the BJ constant is controlled by the second moment, while the order of the LJ constant is controlled by the even order moments up to $2(J-1)$. Additionally, the order of the LJ constant depends inversely on the square root $\sqrt{m}$ of the width. Increasing the width results in the probability getting higher, as well as the LJ constant decreasing. The latter effect leads to lazy training, which can be exploited via Lemma~\ref{lemma_lazy_training}.

\begin{theorem}\label{theorem_mlp_empirical}
Fix any $\epsilon_K, \epsilon_L >0$ and any $C>0$. Let $\theta_0$ be chosen randomly as described above and define $D= B(\theta_0, C\sqrt{m}) \subset \Theta$. 

Then, with probability at least
\begin{multline}\label{eq:prob-LJ-BJ}
1 - 4Je^{-\Omega_{\gamma_{1:J-1},k,l}(m)}-2e^{-c_1 d \min\left( \frac{\epsilon_K^2}{C_{\nu,1}^2}, \frac{\epsilon_K}{C_{\nu,1}} \right)}\\
 - 2e^{-c_J \min\left( \frac{d \epsilon_L^2}{C_{\nu,J}}, \left(\frac{d \epsilon_L}{C_{\nu,J}}\right)^{\frac{1}{J-1}} \right)}
\end{multline}
with absolute constants $c_1,c_J,C_{\nu,1},C_{\nu,J}$, the induced mapping $N_\mu$ is $K_N$-LJ and $L_N$-LJ on $D$ with
\[
K_N = O\left(\sqrt{M_{\nu,2}+1+\epsilon_K}\right)
\]
and
\[
L_N = O\left(\frac{1}{\sqrt{m}} \sqrt{\sum_{i=0}^{J-1}\binom{J-1}{i}M_{\nu,2(J-1-i)}+\epsilon_L}\right),
\]
and
\[
\Vert N_\mu(\theta_0) \Vert = O\left( \sqrt{M_{\nu,2}+1+\epsilon_K} \right).
\]
Where these last three implicit constants depend on $C,\gamma_{1:L-1},k,l,\|\phi'\|_\infty,\Vert \phi' \Vert_L$.
\end{theorem}
\begin{proof}
By Theorem~\ref{thm:nn-map-bj} and Theorem~\ref{thm:nn-map-lj} (which condition on the same event) we have that $N(x,\cdot)$ is $O(\sqrt{\Vert x \Vert^2 + 1})$-BJ and $O(\frac{1}{\sqrt{m}}\sqrt{\Vert x \Vert^2 + 1}^{J-1})$-LJ. These facts combined with Lemma~\ref{lem:inheritance-gj-lj-main} give us BJ and LJ bounds in terms of the moments of $\mu$. These are governed by those of $\nu$ by Corollary~\ref{cor:concentration-sub-weibull}, which gives the first two claims of the result. By Lemma~\ref{lem:nn-map-bounded-initial} (which also conditions on the same event) we have an estimate of $\|N(x,\theta_0)\|$ for every $x$. Integrating over $\mu$ gives a bound in terms of the second moment of $\mu$, which, when combined with Corollary~\ref{cor:concentration-sub-weibull}, gives us the last claim. As all these events happen together with probability at least \eqref{eq:prob-LJ-BJ} the result follows.
\end{proof}

\subsection{Convergence of Overparameterized Learning}\label{convergence}

We need two more ingredients in order to apply Theorem~\ref{theorem_gd_f_circ_F} to the prototype problem. First, we make an assumption about the concentration of the smallest eigenvalue of the NTK at initialization.

\begin{assumption}[Concentration of the smallest eigenvalue of the NTK at initialization]\label{assumption_lambda_min}
Suppose that we have a model such as the one described in Section~\ref{mlp}. Then there exists $\epsilon_\lambda \geq 0$ such that with probability at least $1-\epsilon_\lambda$ with respect to sampling $\theta_0$ and $\mu$ we have
\[
\lambda_{\min}(\partial N_\mu(\theta_0) {\partial N_\mu(\theta_0)}^*) = \Omega(1)
\]
where the implicit constant and $\epsilon_\lambda$ may depend on $\nu,d,J,k,l,\gamma_{1:J-1},\phi$.
\end{assumption}

\begin{remark}
Note that this assumption does not impose $\epsilon_\lambda$ decreasing as width is increased. However this has been shown in several special cases for less general MLPs than ours, such as by \citet[Theorem~3.2]{Montanarietal2020}, \citet[Theorem~4.1]{Nguyenetal2021}, \citet[Theorem~2.1]{Wangetal2021} and \citet[Theorem~1]{Bombarietal2022}. Also in Subsection~\ref{experiment_lambda_min} we experimentally show that the smallest eigenvalue separates from 0 as $m$ increases in our model.
\end{remark}

Finally, let $f : \R_+ \to \R_+$ be any function such that $\lim_{m\to\infty}f(m)\sqrt{m}=\infty$ and $\lim_{m\to\infty}f(m)=0$. The theorem below shows that for sufficient width, we have, with high probability, the following. There is convergence at a linear rate to a global optimum and implicit regularization, i.e., the model interpolates the data, convergence is fast and the global optimum found by GD is very close to both the initial parameter and the global optimum that is closest to initialization. In the following theorem and its proof, the constants corresponding to the moments of the data generating distribution are suppressed. A more detailed proof can be found in Appendix~\ref{app:mlp}.\footnote{Note that in supervised learning, the Lipschitz assumption on $\nabla_z \iota(\cdot,z)$ is satisfied for least squares loss if the target function $t$ is Lipschitz, and for the regularized classification loss if $\inf_{x_1, x_2 \in \R^k : t(x_1) \neq t(x_2)} \Vert x_1 - x_2 \Vert > 0$.}

\begin{theorem}[Convergence of GD in overparameterized learning] \label{thm:general-conv-mlp}
Let $N(x,\theta)$ be a $J$-layer MLP as defined in Section~\ref{mlp}. Let $\nu\in \mathcal{P}(\R^k)$ be a probability distribution satisfying the Lipschitz concentration property and let $\mu=\frac{1}{d}\sum_{i=1}^d \delta_{x_i}$ be an empirical measure obtained by sampling $d$ independent elements $\{x_1,\cdots,x_d\}$ from $\nu$. Fix any $\epsilon_K,\epsilon_L,\epsilon_\Ell>0$. Suppose that the Assumption~\ref{assumption_lambda_min} holds with parameter $\epsilon_\lambda$. Let $\Ell_\mu$ be a loss functional induced by an integrand $\iota : \R^k \times \R^l \to \R$ such that $\iota(x,\cdot)_* = \iota_* \in \R$ and $\iota(x,\cdot)$ is $L_{\Ell}$-LG and $\lambda_{\Ell}$-PL globally for all $x \in X$ with $\nabla_z \iota(\cdot,z) : \R^k \to \R^l$ being $L_\Ell'$-Lipschitz for all $z \in \R^l$. Suppose that we choose the random initial parameter $\theta_0 \in \Theta$ with $\mathcal{N}(0,1)$ independently at each entry, except for those of $A_{0,J}$ distributed according to $\mathcal{N}(0,\frac{1}{m})$.

Denote by $T$ the set of variables $\iota,\nu,\phi,l,k,f,\{\gamma_i : i \in [1:J-1]\},\{M_{\nu,2i} : i \in [1:J-1]\},\epsilon_K,\epsilon_L,\epsilon_\Ell$.
Then there exists $M=M_{T}>0$ such that if $m\ge M$ then with probability at least 
\begin{multline}\label{probability_convergence}
1
- \epsilon_\lambda
- 4Je^{-\Omega_{\gamma_{1:J-1},k,l}(m)} 
- 2e^{-c_1 d \min\left( \frac{\epsilon_K^2}{C_{\nu,1}^2}, \frac{\epsilon_K}{C_{\nu,1}} \right)} \\
- 2e^{-c_J \min\left( \frac{d \epsilon_L^2}{C_{\nu,J}}, \left(\frac{d \epsilon_L}{C_{\nu,J}}\right)^{\frac{1}{J-1}} \right)}
- 2e^{-c_1 d \min\left( \frac{\epsilon_\Ell^2}{C_{\nu,L_\Ell'}^2}, \frac{\epsilon_\Ell}{C_{\nu,L_\Ell'}} \right)}
\end{multline}
with absolute constants $c_1,c_J,C_{\nu,1},C_{\nu,J},C_{\nu,L_\Ell'}$, there exists $L=O_{T}(1)$ such that we can choose a learning rate $\alpha\in\left(0,\frac{2}{L}\right)$ and have $q=q_{T,\alpha} \in (0,1)$. Define GD starting at $\theta_0$ recursively for $i \geq 1$ as
\[
\theta_i = \theta_{i-1} - \alpha \nabla (\Ell_\mu \circ N_\mu)(\theta_{i-1}).
\]
Then convergence is linear with rate $q$, i.e., for all $i \ge 0$
\[
(\Ell_\mu \circ N_\mu) (\theta_i) - {\Ell_\mu}_* 
\leq q^i ((\Ell_\mu \circ N_\mu) (\theta_0) - {\Ell_\mu}_*).
\]
The sequence $\theta_i$ converges to an interpolating solution  $\theta_* = \lim_{i \to \infty} \theta_i$, i.e.,
\[
(\Ell_\mu \circ N_\mu)(\theta_*) = {\Ell_\mu}_*.
\]
And there is implicit regularization, i.e.,
\begin{equation} \label{implicit_regularization_bound}
\Vert \theta_* - \theta_0 \Vert \leq R
\end{equation}
with $R=O_{T,\alpha}(1)$.

Moreover, if $\hat{\theta}_* = \argmin_{\theta \in \Theta : (\Ell_\mu \circ N_\mu)(\theta) = {\Ell_\mu}_*}{ \Vert \theta - \theta_0 \Vert }$, i.e., if $\hat{\theta}_* \in \Theta$ is an optimum of $(\Ell_\mu \circ N_\mu)$ that is closest to $\theta_0$, then one has $
\Vert \theta_* - \theta_0 \Vert =O_{T,\alpha}(\Vert \hat{\theta}_* - \theta_0 \Vert)$.
\end{theorem}
\begin{proof}[Proof sketch.]
By Theorem~\ref{theorem_mlp_empirical}, we have that with high probability $N_\mu$ is $K_N$-BJ and $L_N$-LJ on $D=B(\theta_0,\sqrt{m}f(m))$ with $K_N=O_T(1)$ and $L_N=O_T(\frac{1}{\sqrt{m}})$ and $\Vert N_\mu(\theta_0) \Vert = O_T(1)$. Note that the conditions on $\iota$ imply that $\Ell_\mu(0) = O_T(1)$ and $\Vert \nabla \Ell_\mu(0) \Vert = O_T(1)$ with high probability. By $\Vert N_\mu(\theta_0) \Vert = O_T(1)$ and Lemma~\ref{lemma_initial_loss_value}, we have that $\Ell_\mu(N_\mu(\theta_0)) = O_T(1)$. Define $K_\Ell = \sqrt{2L_\Ell(\Ell_\mu(N_\mu(\theta_0)) - {\Ell_\mu}_*)}$, so that $K_\Ell=O_T(1)$ as well (since  and ${\Ell_\mu}_* = \iota_*$). By Assumption~\ref{assumption_lambda_min}, we have $\lambda_{\min}(\partial N_\mu(\theta_0) {\partial N_\mu(\theta_0)}^*) = \Omega_T(1)$ with probability $1-\epsilon_\lambda$. Via Lemma~\ref{lemma_lazy_training} we exploit lazy training and have that $N_\mu$ is $\lambda_N$-UC on $D$ with $\lambda_N = \lambda_{\min}(\partial N_\mu(\theta_0) {\partial N_\mu(\theta_0)}^*) - 2 K_N L_N \sqrt{m} f(m) = \Omega_T(1)$ for $m$ large enough (as $2 K_N L_N \sqrt{m} f(m)=O_T(f(m))$ and $\lim_{m\to \infty}f(m)= 0$). We can now define the constants of Theorem~\ref{theorem_gd_f_circ_F}. Let $K=K_N K_\Ell=O_T(1)$, $L=K_N^2 L_\Ell + K_\Ell L_N=O_T(1)$, $\lambda = \lambda_N \lambda_\Ell=\Omega_T(1)$, choose a learning rate $\alpha \in (0,\frac{2}{L})$, and let $q = 1 + L\lambda\alpha^2-2\lambda\alpha \in (0,1)$ and $R=\frac{\alpha K}{1-\sqrt{q}}$. We have $q=O_{T,\alpha}(1)$, so that $R=O_{T,\alpha}(1)$ as well. Therefore, for $m$ large enough, we have that $R < \sqrt{m}f(m)$ (since $\lim_{m \to \infty} \sqrt{m}f(m)=\infty$), so that the conditions of Theorem~\ref{theorem_gd_f_circ_F} are satisfied and the proof is complete.
\end{proof}

As before, the optimal learning rate is $\alpha_* = \frac{1}{L}$. The proof above shows that $L=O(1)$, so that $\alpha_* = \Omega(1)$. Even though the optimal learning rate can slightly increase with increasing width, this leads to learning rate transfer as in \citet{Yangetal2022}, as demonstrated in Subsection~\ref{experiment_lr_transfer}. Although not made explicit in our assumption, the concentration of the smallest eigenvalue of the NTK at initialization gets stronger with increasing width, which has been shown to happen in other works cited above, and is demonstrated for our MLP formulation in Subsection~\ref{experiment_lambda_min}. The fact $L_N=O(\frac{1}{\sqrt{m}})$ means that training ``gets lazier'' with increasing width, as demonstrated in Subsection~\ref{experiment_lazy_training}. As can be seen from the proof above, the latter two effects lead to the convergence rate $q$ decreasing (therefore convergence getting faster) as width is increased. In turn, the GD path length bound $R$ decreases as well, making the implicit regularization effect stronger (and the global optimum found by GD gets closer and closer to the one closest to initialization). By the following lemma (proved in Appendix~\ref{app:mlp}), this leads to the Lipschitz constant $\Vert N(\cdot,\theta_*) \Vert_L$ of the trained MLP to be smaller and smaller as well, courtesy of the bound $\Vert N(\cdot,\theta) \Vert_L \leq \left(\frac{\Vert \phi \Vert_L}{\sqrt{m}}\right)^{J-1} \prod_{j=1}^J \Vert A_j \Vert$. This is demonstrated in Subsection~\ref{experiment_implicit regularization}.

\begin{lemma}\label{lemma_lip_const}
With the assumptions of Theorem~\ref{thm:general-conv-mlp}, the solution $\theta_*$ found by GD in the event with probability at least \eqref{probability_convergence} is such that $\Vert N(\cdot,\theta_*) \Vert_L = O_{T,\alpha}(1)$.
\end{lemma}

\subsection{Generalization Bounds}\label{generalization}

In overparameterized learning, it is desirable for the solution to generalize to the real world, i.e., for the generalization error $\Ell_\nu \circ N_\nu(\theta_*) - \Ell_\mu \circ N_\mu(\theta_*)$ to be small. It turns out that the PL condition, Lipschitz concentration and implicit regularization lead to bounds on the generalization error.

\begin{theorem}[Generalization in overparameterized learning]\label{theorem_generalization}
Fix any $\epsilon_\nu>0$. With the assumptions of Theorem~\ref{thm:general-conv-mlp}, the solution $\theta_*$ found by GD in the event with probability at least \eqref{probability_convergence} is such that with probability at least $1 - e^{-c_1 d \min\left( \frac{\epsilon_\nu^2}{C_T^2}, \frac{\epsilon_\nu}{C_T} \right)}$ with absolute constants $c_1,C_T$ one has that the generalization error is bounded as
\[
(\Ell_\nu \circ N_\nu)(\theta_*) - (\Ell_\mu \circ N_\mu)(\theta_*) \leq \epsilon_\nu.
\]
\end{theorem}
\begin{proof}
Let $g : \R^k \to \R$ be defined as $g(x) = \Vert \nabla_z \iota(x,N(x,\theta_*)) \Vert$, composing $N_*(x) = (x,N(x,\theta_*))$, $\nabla_z \iota$ and $\Vert \cdot \Vert$. By Lemma~\ref{lemma_lip_const}, $\Vert N(\cdot,\theta_*) \Vert_L = O_{T,\alpha}(1)$, so that $\Vert N_* \Vert_L \leq \sqrt{\Vert N(\cdot,\theta_*) \Vert_L^2 + 1} = O_{T,\alpha}(1)$. Also, $\Vert \nabla_z \iota \Vert_L \leq \sqrt{L_\Ell^2 + {L_\Ell'}^2}$ and $\Vert \cdot \Vert$ is $1$-Lipschitz, so that $\Vert g \Vert_L = O_{T,\alpha}(1)$. By Lipschitz concentration we have $\vert g(x) - \int g d\nu \vert > t$ with probability at most $2e^{-\frac{c_\nu t^2}{\Vert g \Vert_L^2}}$, i.e., $g(x)$ with $x$ distributed according to $\nu$ is sub-Gaussian with norm $O_{T,\alpha}(1)$. By \citet[Lemma~2.7.6]{Vershynin2018}, $g(x)^2$ is sub-exponential with norm $O_{T,\alpha}(1)$. Note that $\Vert \nabla \Ell_\mu(N_\mu(\theta_*)) \Vert^2 = \frac{1}{d} \sum_{i=1}^d g(x_i)^2 = 0$, $\int g^2 d\nu = \Vert \nabla \Ell_\nu(N_\nu(\theta_*)) \Vert^2$, ${\Ell_\nu}_* = {\Ell_\mu}_* = \iota_* = (\Ell_\mu \circ N_\mu)(\theta_*)$ by \citet[Theorem~3A]{Rockafellar1976} and $(\Ell_\nu \circ N_\nu)(\theta_*) - {\Ell_\nu}_* \leq \frac{1}{2\lambda_\Ell} \Vert \nabla \Ell_\nu(N_\nu(\theta_*)) \Vert^2$ since $\Ell_\nu$ is $\lambda_\Ell$-PL by Lemma~\ref{lem:cond-iota}. Hence via Lemma~\ref{sub_weibull_bernstein} the claim follows.
\end{proof}

Note that the bound $(\Ell_\nu \circ N_\nu)(\theta_*) - {\Ell_\nu}_* \leq \epsilon_\nu$ holds as well. A consequence of Theorem~\ref{thm:general-conv-mlp} and Theorem~\ref{theorem_generalization} is that the generalization error decreases as width is increased and/or as more data is sampled, as demonstrated in Subsection~\ref{experiment_generalization}.

\section{Experiments}\label{experiments}

This section provides experimental results supporting our theoretical claims and the NTK assumption. We have used the MNIST dataset as the data generating distribution $\nu \in \Prob(X)$ with $X = \R^k$ and $k=28^2$, sampling datasets $\mu = \frac{1}{d} \sum_{i=1}^d \delta_{x_i} \in \Prob(X)$ of different sizes $d$. The loss functional was induced by the regularized classification loss $\ell(y,z) = \log \sum e^z - z_y + \frac{\lambda_\Ell}{2} \Vert z \Vert^2$. We have used an MLP with $1$ hidden layer (i.e., $J=2$) and $\phi$ being a smooth rectifier. Additional details can be found in Appendix~\ref{app:experiments}.

\subsection{Learning Rate Transfer}\label{experiment_lr_transfer}

\begin{figure}[ht]
\vskip 0.2in
\begin{center}
\centerline{
\resizebox{.9\linewidth}{!}{
\begin{tikzpicture}
\pgfplotsset{cycle list/Dark2}
\begin{axis}[
    ymode = log,
    xlabel = $\log_2(\alpha)$,
    ylabel = $\E (\Ell_\mu \circ N_\mu)(\theta_{2000})$,
    ymax = 100,
    legend columns=2,
]
\addlegendentry{$m=128$}
\addplot[mycolor1] table [x=Step, y=Value, col sep=comma] {csvs/lr_transfer/m_128.csv};
\addlegendentry{$m=256$}
\addplot[mycolor2] table [x=Step, y=Value, col sep=comma] {csvs/lr_transfer/m_256.csv};
\addlegendentry{$m=512$}
\addplot[mycolor3] table [x=Step, y=Value, col sep=comma] {csvs/lr_transfer/m_512.csv};
\addlegendentry{$m=1024$}
\addplot[mycolor4] table [x=Step, y=Value, col sep=comma] {csvs/lr_transfer/m_1024.csv};
\addlegendentry{$m=2048$}
\addplot[mycolor5] table [x=Step, y=Value, col sep=comma] {csvs/lr_transfer/m_2048.csv};
\addlegendentry{$m=4096$}
\addplot[mycolor6] table [x=Step, y=Value, col sep=comma] {csvs/lr_transfer/m_4096.csv};
\addlegendentry{$m=8192$}
\addplot[mycolor7] table [x=Step, y=Value, col sep=comma] {csvs/lr_transfer/m_8192.csv};
\addlegendentry{$m=16384$}
\addplot[mycolor8] table [x=Step, y=Value, col sep=comma] {csvs/lr_transfer/m_16384.csv};
\end{axis}
\end{tikzpicture}
}
}
\caption{Expected loss ($y$ axis) after training for $2000$ GD steps with respect to $\log_2$ of the learning rate ($x$ axis).}
\label{plot_lr_transfer}
\end{center}
\vskip -0.2in
\end{figure}
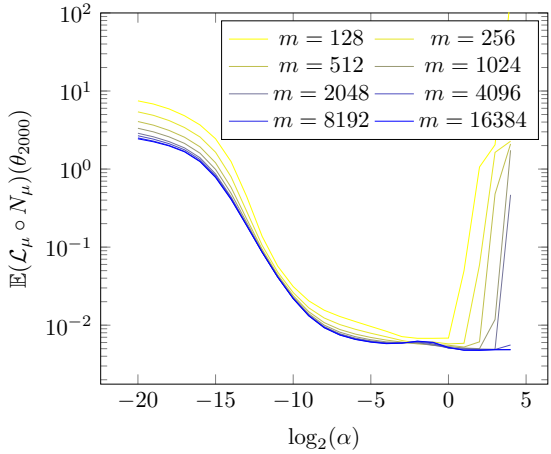

Figure~\ref{plot_lr_transfer} depicts the influence of the learning rate on the training loss after $2000$ iterations for different widths. The optimal learning rate has a slight drift, but is otherwise stable, similarly to the results of \citet[Figure~1]{Yangetal2022}. In this example one could optimize the learning rate for width $m=256$ and it would be approximately optimal for widths at least up to $m=16384$.

\subsection{Concentration of $\lambda_{\min}$ of the NTK at Initialization}\label{experiment_lambda_min}

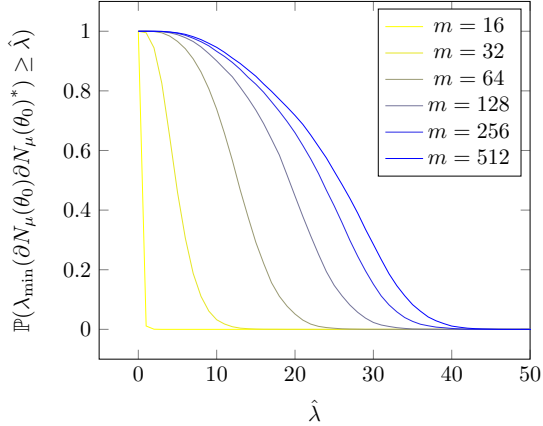
\begin{figure}[ht]
\vskip 0.2in
\begin{center}
\centerline{
\resizebox{.9\linewidth}{!}{
\begin{tikzpicture}
\pgfplotsset{cycle list/Dark2}
\begin{axis}[
    xlabel = $\hat{\lambda}$,
    ylabel = $\mathbb{P}( \lambda_{\min}(\partial N_\mu(\theta_0) {\partial N_\mu(\theta_0)}^*) \geq \hat{\lambda})$,
    legend columns=1,
    xmax=50,
]
\addlegendentry{$m=16$}
\addplot[mycolor1] table [x=Step, y=Value, col sep=comma] {csvs/lambda_min/m_16.csv};
\addlegendentry{$m=32$}
\addplot[mycolor2] table [x=Step, y=Value, col sep=comma] {csvs/lambda_min/m_32.csv};
\addlegendentry{$m=64$}
\addplot[mycolor4] table [x=Step, y=Value, col sep=comma] {csvs/lambda_min/m_64.csv};
\addlegendentry{$m=128$}
\addplot[mycolor5] table [x=Step, y=Value, col sep=comma] {csvs/lambda_min/m_128.csv};
\addlegendentry{$m=256$}
\addplot[mycolor7] table [x=Step, y=Value, col sep=comma] {csvs/lambda_min/m_256.csv};
\addlegendentry{$m=512$}
\addplot[mycolor8] table [x=Step, y=Value, col sep=comma] {csvs/lambda_min/m_512.csv};
\end{axis}
\end{tikzpicture}
}
}
\caption{Probability ($y$ axis) of the smallest eigenvalue of the NTK at initialization being above given thresholds ($x$ axis).}
\label{plot_lambda_min}
\end{center}
\vskip -0.2in
\end{figure}

In Figure~\ref{plot_lambda_min} we show how the concentration of the smallest eigenvalue of the NTK at initialization evolves as width is increased. The results align with Assumption~\ref{assumption_lambda_min}.

\subsection{Lazy Training}\label{experiment_lazy_training}

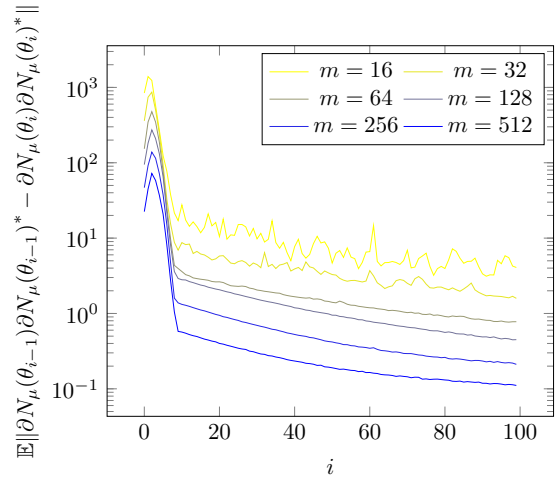
\begin{figure}[ht]
\vskip 0.2in
\begin{center}
\centerline{
\resizebox{.9\linewidth}{!}{
\begin{tikzpicture}
\pgfplotsset{cycle list/Dark2}
\begin{axis}[
    ymode = log,
    xlabel = $i$,
    ylabel = $\E \Vert \partial N_\mu(\theta_{i-1}) {\partial N_\mu(\theta_{i-1})}^* - \partial N_\mu(\theta_i) {\partial N_\mu(\theta_i)}^* \Vert$,
    legend columns=2,
]
\addlegendentry{$m=16$}
\addplot[mycolor1] table [x=Step, y=Value, col sep=comma] {csvs/lazy_training/m_16.csv};
\addlegendentry{$m=32$}
\addplot[mycolor2] table [x=Step, y=Value, col sep=comma] {csvs/lazy_training/m_32.csv};
\addlegendentry{$m=64$}
\addplot[mycolor4] table [x=Step, y=Value, col sep=comma] {csvs/lazy_training/m_64.csv};
\addlegendentry{$m=128$}
\addplot[mycolor5] table [x=Step, y=Value, col sep=comma] {csvs/lazy_training/m_128.csv};
\addlegendentry{$m=256$}
\addplot[mycolor7] table [x=Step, y=Value, col sep=comma] {csvs/lazy_training/m_256.csv};
\addlegendentry{$m=512$}
\addplot[mycolor8] table [x=Step, y=Value, col sep=comma] {csvs/lazy_training/m_512.csv};
\end{axis}
\end{tikzpicture}
}
}
\caption{Expected difference ($y$ axis) between consecutive NTKs during training steps ($x$ axis).}
\label{plot_lazy_training}
\end{center}
\vskip -0.2in
\end{figure}

In Figure~\ref{plot_lazy_training} we plot the influence of width on the operator norm of the difference of consecutive NTKs during training. As predicted by our theory, since $L_N = O(\frac{1}{\sqrt{m}})$, the NTK changes less and less as width is increased, i.e., overparameterization leads to lazy training.

\subsection{Implicit Regularization}\label{experiment_implicit regularization}

\begin{figure}[ht]
\vskip 0.2in
\begin{center}
\centerline{
\resizebox{.9\linewidth}{!}{
\begin{tikzpicture}
\pgfplotsset{cycle list/Dark2}
\begin{axis}[
    xlabel = $i$,
    ylabel = $\E \left(\frac{\Vert \phi \Vert_L}{\sqrt{m}}\right)^{J-1} \prod_{j=1}^J \Vert A_{i,j} \Vert$,
    legend columns=2,
]
\addlegendentry{$m=16$}
\addplot[mycolor1] table [x=Step, y=Value, col sep=comma] {csvs/implicit_regularization/m_16.csv};
\addlegendentry{$m=32$}
\addplot[mycolor2] table [x=Step, y=Value, col sep=comma] {csvs/implicit_regularization/m_32.csv};
\addlegendentry{$m=64$}
\addplot[mycolor3] table [x=Step, y=Value, col sep=comma] {csvs/implicit_regularization/m_64.csv};
\addlegendentry{$m=128$}
\addplot[mycolor4] table [x=Step, y=Value, col sep=comma] {csvs/implicit_regularization/m_128.csv};
\addlegendentry{$m=256$}
\addplot[mycolor5] table [x=Step, y=Value, col sep=comma] {csvs/implicit_regularization/m_256.csv};
\addlegendentry{$m=512$}
\addplot[mycolor6] table [x=Step, y=Value, col sep=comma] {csvs/implicit_regularization/m_512.csv};
\end{axis}
\end{tikzpicture}
}
}
\caption{Expected bound on $\Vert N(\cdot,\theta_i) \Vert_L$ ($y$ axis) during training steps ($x$ axis).}
\label{plot_implicit_regularization}
\end{center}
\vskip -0.2in
\end{figure}
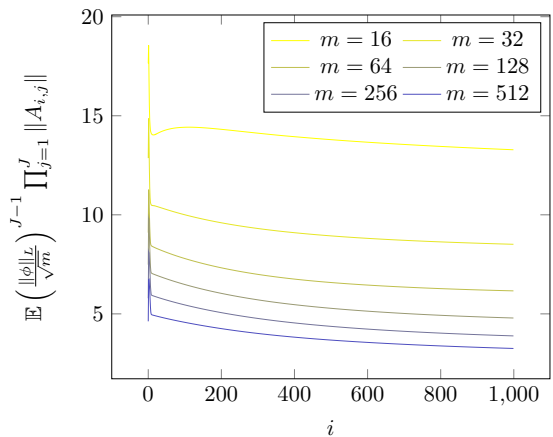

Figure~\ref{plot_implicit_regularization} shows the effect of width on the bound on $\Vert N(\cdot,\theta_i) \Vert_L$ during training. As we proposed, GD travels less and less as width is increased, implicitly Lipschitz regularizing $N(\cdot,\theta_i)$ more and more via Lemma~\ref{lemma_lip_const}. In other words, increasing overparameterization leads to stronger implicit regularization.

\subsection{Generalization Error}\label{experiment_generalization}

\begin{figure}[ht]
\vskip 0.2in
\begin{center}
\centerline{
\resizebox{\linewidth}{!}{
\begin{tikzpicture}
\begin{groupplot}[group style={group size= 1 by 3}, height=5cm, width=\linewidth]
\nextgroupplot[ylabel={$d=256$}, xmax = 2]
\addplot[mycolor1] table [x expr=\thisrowno{1}*0.01, y=Value, col sep=comma] {csvs/generalization/d_256_m_64.csv};
\addplot[mycolor2] table [x expr=\thisrowno{1}*0.01, y=Value, col sep=comma] {csvs/generalization/d_256_m_128.csv};
\addplot[mycolor3] table [x expr=\thisrowno{1}*0.01, y=Value, col sep=comma] {csvs/generalization/d_256_m_256.csv};
\addplot[mycolor4] table [x expr=\thisrowno{1}*0.01, y=Value, col sep=comma] {csvs/generalization/d_256_m_512.csv};
\addplot[mycolor5] table [x expr=\thisrowno{1}*0.01, y=Value, col sep=comma] {csvs/generalization/d_256_m_1024.csv};
\addplot[mycolor6] table [x expr=\thisrowno{1}*0.01, y=Value, col sep=comma] {csvs/generalization/d_256_m_2048.csv};
\addplot[mycolor7] table [x expr=\thisrowno{1}*0.01, y=Value, col sep=comma] {csvs/generalization/d_256_m_4096.csv};
\addplot[mycolor8] table [x expr=\thisrowno{1}*0.01, y=Value, col sep=comma] {csvs/generalization/d_256_m_8192.csv};
\coordinate (top) at (rel axis cs:0,1);
\nextgroupplot[ylabel={$d=512$}, xmax = 2]
\addplot[mycolor1] table [x expr=\thisrowno{1}*0.01, y=Value, col sep=comma] {csvs/generalization/d_512_m_64.csv};
\addplot[mycolor2] table [x expr=\thisrowno{1}*0.01, y=Value, col sep=comma] {csvs/generalization/d_512_m_128.csv};
\addplot[mycolor3] table [x expr=\thisrowno{1}*0.01, y=Value, col sep=comma] {csvs/generalization/d_512_m_256.csv};
\addplot[mycolor4] table [x expr=\thisrowno{1}*0.01, y=Value, col sep=comma] {csvs/generalization/d_512_m_512.csv};
\addplot[mycolor5] table [x expr=\thisrowno{1}*0.01, y=Value, col sep=comma] {csvs/generalization/d_512_m_1024.csv};
\addplot[mycolor6] table [x expr=\thisrowno{1}*0.01, y=Value, col sep=comma] {csvs/generalization/d_512_m_2048.csv};
\addplot[mycolor7] table [x expr=\thisrowno{1}*0.01, y=Value, col sep=comma] {csvs/generalization/d_512_m_4096.csv};
\addplot[mycolor8] table [x expr=\thisrowno{1}*0.01, y=Value, col sep=comma] {csvs/generalization/d_512_m_8192.csv};
\nextgroupplot[ylabel={$d=1024$}, xmax = 2, xlabel = $\epsilon$, legend style={at={(0.5,-0.4)},anchor=north}, legend columns=2]
\addlegendentry{$m=64$}
\addplot[mycolor1] table [x expr=\thisrowno{1}*0.01, y=Value, col sep=comma] {csvs/generalization/d_1024_m_64.csv};
\addlegendentry{$m=128$}
\addplot[mycolor2] table [x expr=\thisrowno{1}*0.01, y=Value, col sep=comma] {csvs/generalization/d_1024_m_128.csv};
\addlegendentry{$m=256$}
\addplot[mycolor3] table [x expr=\thisrowno{1}*0.01, y=Value, col sep=comma] {csvs/generalization/d_1024_m_256.csv};
\addlegendentry{$m=512$}
\addplot[mycolor4] table [x expr=\thisrowno{1}*0.01, y=Value, col sep=comma] {csvs/generalization/d_1024_m_512.csv};
\addlegendentry{$m=1024$}
\addplot[mycolor5] table [x expr=\thisrowno{1}*0.01, y=Value, col sep=comma] {csvs/generalization/d_1024_m_1024.csv};
\addlegendentry{$m=2048$}
\addplot[mycolor6] table [x expr=\thisrowno{1}*0.01, y=Value, col sep=comma] {csvs/generalization/d_1024_m_2048.csv};
\addlegendentry{$m=4096$}
\addplot[mycolor7] table [x expr=\thisrowno{1}*0.01, y=Value, col sep=comma] {csvs/generalization/d_1024_m_4096.csv};
\addlegendentry{$m=8192$}
\addplot[mycolor8] table [x expr=\thisrowno{1}*0.01, y=Value, col sep=comma] {csvs/generalization/d_1024_m_8192.csv};
\coordinate (bot) at (rel axis cs:1,0);
\end{groupplot}
\path (top-|current bounding box.west) -- node[anchor=south,rotate=90] {$\mathbb{P}( \Ell_\nu \circ N_\nu(\theta_{2000}) - \Ell_\mu \circ N_\mu(\theta_{2000}) \geq \epsilon)$} (bot-|current bounding box.west);
\end{tikzpicture}
}
}
\caption{Probability ($y$ axis) of the generalization error being above given thresholds ($x$ axis) after training for $2000$ GD steps.}
\label{plot_generalization}
\end{center}
\vskip -0.2in
\end{figure}
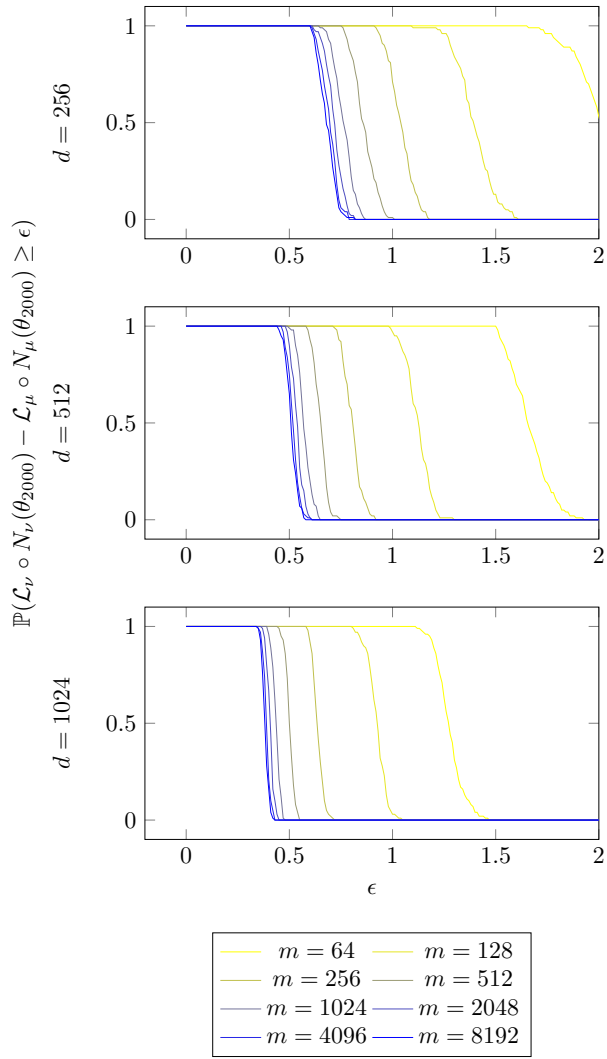

Courtesy of implicit regularization, the generalization error concentrates closer and closer to $0$ as the width and/or the number of data is increased, as shown in Figure~\ref{plot_generalization}. This explains the double descent phenomenon \citep{Belkinetal2019}. For $m$ sufficiently large, our convergence result holds and the model perfectly learns the data, but the implicit regularization effect is weak and the model does not generalize well. For even larger values of $m$ this effect gets stronger, leading to better generalization performance.

\section{Related Work} \label{related}

The main influences for our result on nonconvex composite optimization were \citet{Polyak1963} on the convergence of gradient descent for nonconvex losses, as well as \citet{Oymaketal2019a} and \citet{Liuetal2022} on the behavior of gradient descent on the nonlinear least squares problem. Theorem~\ref{theorem_gd_f_circ_F} is a significant generalization of the classical results of \citet{Polyak1963} (extending to the composite setting), as well as \citet[Theorem~2.1]{Oymaketal2019a} and \citet[Theorem~6]{Liuetal2022} (treating general losses). A less general result, without giving sharp constants or making the GD path length bounds explicit, was proposed by \citet{Songetal2021}, assuming that $G,H$ are finite dimensional (therefore not covering infinite dimensional parameter spaces and infinite data), a bound on $f(F(x_0))$, a global LJ bound for $F$ and that $f$ is twice differentiable.

Exploiting the lazy training phenomenon and the positivity of the NTK is a popular theoretical tool employed by contemporary works to prove convergence of GD for training neural networks. A number of works have focused on the least squares loss, with \citet{Duetal2018, Oymaketal2019a, Aroraetal2019, Wuetal2019, Oymaketal2020, Songetal2021} restricting to shallow neural networks and \citet{Duetal2019, Zouetal2019, Nguyenetal2020, Nguyen2021} treating deeper ones. Training deep neural networks with the binary classification loss was analyzed by \citet{Zouetal2018}. A more general work along this line is \citet{Allen-Zhuetal2019b}, treating supervised learning with general loss functions. A general property of these works is that the neural networks contain no biases. The only work among these to treat generalization is \citet{Aroraetal2019}. In contrast, we treat general loss functionals, covering learning problems outside of supervised learning, our MLP formulation has biases and we give bounds on the generalization error.

The NTK was proposed by \citet{Jacotetal2018} as a theoretical tool to analyze neural network training. Recognizing its importance, many works have since been focusing on proving concentration results about the smallest eigenvalue of the NTK at initialization \citep{Montanarietal2020, Nguyenetal2021, Wangetal2021, Bombarietal2022}. The first to analyze the lazy training phenomenon was \citet{Chizatetal2019}, restricting attention to gradient flows. Implicit regularization was treated in \citet{Oymaketal2019a}, and was described in the review papers \citet{Belkin2021} and \citet{Bartlettetal2021} as an important phenomenon to understand the success of deep learning.

\section{Limitations and Future Directions} \label{limitations}
Immediate future works are to prove that for our MLP formulation the smallest eigenvalue of the NTK at initialization is $\Omega(1)$ with high probability as in Assumption~\ref{assumption_lambda_min}, and to extend our results to minibatch stochastic gradient descent. A limitation is that we require $N(x,\theta)$ to be differentiable in $\theta$, which excludes activations which are not everywhere differentiable, e.g., the ReLU. Perhaps the generalized Jacobian theory of \citet{Palesetal2007a} could cover any locally Lipschitz activation, but is outside the scope of this paper. Although many loss function(al)s can easily be regularized to satisfy the PL condition, two future directions are to develop general LG and PL regularization strategies, and to generalize our results to loss functionals with weaker assumptions. In particular, there are learning problems where $\Ell_\mu$ is not an integral functional. While our convergence result in Subsection~\ref{convergence} already covers many problems of interest, some problems that can already be translated to the prototype problem, including the ones in Appendix~\ref{app:prototype}, are left for future work.

\newpage
\subsection*{Acknowledgements}
D\'avid Terj\'ek is supported by the Hungarian National Excellence Grant 2018-1.2.1-NKP-00008 and by the Hungarian Ministry of Innovation and Technology NRDI Office within the framework of the Artificial Intelligence National Laboratory Program. Diego Gonz\'alez-S\'anchez is supported by projects KPP 133921 and Momentum (Lend\"{u}let) 30003 of the Hungarian Government.

\bibliography{affol}
\bibliographystyle{affol}

\newpage
\appendix
\onecolumn

\section{Nonconvex Composite Optimization}\label{app:composite}

In this section we prove Theorem~\ref{theorem_gd_f_circ_F}. We will essentially show that under the assumptions of Theorem~\ref{theorem_gd_f_circ_F} we can deduce that $(f\circ F)$ is LG and PL and thus we can use similar arguments as those in \citet{Polyak1963}. In order to do this, first we need to prove some consequences of the LG, PL, LJ and UC conditions. For any $x,y\in H$ let us denote by $[x,y]$ the segment joining them, i.e., $\{tx+(1-t)y:t\in[0,1]\}$.

\begin{proposition}[Fundamental theorem of calculus]\label{fundamental_theorem_of_calculus}
Let $G,H$ be Hilbert spaces, let $D \subset G$ and let $F : G \to H$ be a function which is $L$-LJ on $D$ for some $L>0$. Then\footnote{The integral here is the Bochner integral \citep[Definition~1.6.8]{Cobzasetal2019}.}
\[
F(y)-F(x) = \int_0^1 \partial F(x+t(y-x))(y-x) dt
\]
for all $x,y \in D$ such that $[x,y] \subset D$.
\end{proposition}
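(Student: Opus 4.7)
The plan is to reduce to the classical scalar fundamental theorem of calculus by composing $F$ along the segment $[x,y]$ and applying the chain rule, then upgrading from the scalar case to the Hilbert-valued case via duality.

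First I would fix $x, y \in D$ with $[x,y] \subset D$ and define the auxiliary map $g \colon [0,1] \to H$ by $g(t) = F(x + t(y-x))$. Since $F$ is differentiable on $D$ (implicit in the LJ assumption, as stated in Definition~\ref{app:def-lg}) and the affine map $t \mapsto x+t(y-x)$ is differentiable with derivative $(y-x)$, the chain rule for Fr\'echet derivatives gives that $g$ is differentiable on $[0,1]$ with
\[
g'(t) = \partial F(x + t(y-x))(y-x).
\]
Next, the LJ condition implies that $t \mapsto \partial F(x+t(y-x))$ is Lipschitz (and in particular continuous) from $[0,1]$ into $\Ell(G,H)$, since for $s,t \in [0,1]$ we have $\Vert \partial F(x+s(y-x))-\partial F(x+t(y-x)) \Vert \leq L |s-t| \cdot \Vert y-x \Vert$. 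Therefore $g'$ is continuous on the compact interval $[0,1]$ with values in $H$, which guarantees Bochner integrability.

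It then remains to establish the vector-valued identity $g(1)-g(0) = \int_0^1 g'(t)\,dt$. I would reduce this to the scalar fundamental theorem of calculus using duality: for every continuous linear functional $\phi \in H^*$, the composition $\phi \circ g \colon [0,1] \to \R$ is differentiable with $(\phi \circ g)'(t) = \phi(g'(t))$ (by linearity and continuity of $\phi$), and is continuously differentiable by the continuity of $g'$. The classical scalar fundamental theorem of calculus then yields
\[
\phi\bigl(g(1)-g(0)\bigr) = \int_0^1 \phi(g'(t))\,dt = \phi\!\left(\int_0^1 g'(t)\,dt\right),
\]
where the last equality is one of the defining properties of the Bochner integral (continuous linear functionals commute with it). Since $\phi \in H^*$ was arbitrary, the Hahn-Banach theorem gives $g(1)-g(0) = \int_0^1 g'(t)\,dt$, which unpacks to the claim.

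The only potentially subtle step is the passage from the scalar FTC to its vector-valued analogue, but this is a standard application of Hahn-Banach together with the basic properties of Bochner integration (for instance as in \citep[Chapter 1]{Cobzasetal2019}, cited already in the paper for the definition of the Bochner integral). The LJ hypothesis is used only to ensure continuity, and hence Bochner integrability, of $g'$; the integral identity itself would survive under much weaker assumptions.
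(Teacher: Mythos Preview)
Your argument is correct and follows the same overall strategy as the paper: reduce to a one-variable map $g(t)=F(x+t(y-x))$, use the $L$-LJ hypothesis to see that $g'$ is (Lipschitz, hence) continuous, and then invoke a vector-valued fundamental theorem of calculus. The difference lies only in how that last step is executed. The paper works on the open interval $(0,1)$, invokes the Kirszbraun extension theorem to push $g'$ continuously to the closed interval $[0,1]$, and then cites a ready-made Banach-space FTC \citep[Theorem~89]{Hajeketal2014}. You instead work directly on $[0,1]$ and prove the vector-valued identity by pairing with an arbitrary $\phi\in H^*$, applying the scalar FTC, and separating points via Hahn--Banach. Your route is more self-contained and avoids both the Kirszbraun detour and the external citation; the paper's route is shorter on the page because it outsources the analysis. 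Either way the LJ assumption is used only to guarantee continuity of $g'$, exactly as you observe.
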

\begin{proof}
Let $f : [0,1] \to H$ be defined as
\[
F(x+t(y-x)).
\]
Thus
\[
\partial f(t)=\partial F(x+t(y-x))(y-x)
\]
for $t \in (0,1)$. Then we have that
\[
\Vert \partial f(t_1) - \partial f(t_2) \Vert 
= \Vert (\partial F(x+t_1(y-x))-\partial F(x+t_2(y-x)))(y-x) \Vert
\]
\[
\leq L \Vert (t_1-t_2)(y-x) \Vert\Vert y-x \Vert
= \vert t_1-t_2 \vert L \Vert y-x \Vert^2
\]
for $t_1,t_2 \in (0,1)$. Hence $\partial f$ is $L \Vert y-x \Vert^2$-Lipschitz on $(0,1)$. In particular, $f$ is $C^1$ on $(0,1)$. By the Kirszbraun theorem \citep[Theorem~4.2.3]{Cobzasetal2019}, $\partial f$ has a (Lipschitz) continuous extension to $[0,1]$. By the fundamental theorem of calculus \citep[Theorem~89]{Hajeketal2014} applied to $f$,
\[
f(1)-f(0) = F(y)-F(x) = \int_0^1 \partial f(t) dt 
= \int_0^1 \partial F(x+t(y-x))(y-x) dt.
\]
\end{proof}

In particular, if $f : H \to \R$ is $L$-LG on $D \subset H$ for some $L>0$ then
\[
f(y)-f(x) = \int_0^1 \langle \nabla f(x+t(y-x)), y-x \rangle dt
\]
for all $x,y \in D$ such that $[x,y] \subset D$.

The LG property has an interesting consequence that we will use in the sequel.

\begin{proposition}\label{lg_corollary_1}
Let $f : H \to \R$ be an $L$-LG function  on $D \subset H$ for some $L>0$. Then for all $ x,y \in D$ such that $[x,y] \subset D$,
\[
\left\vert f(y) - f(x) - \langle \nabla f(x), y - x \rangle \right\vert \leq \frac{L}{2} \Vert y - x \Vert^2.
\]
\end{proposition}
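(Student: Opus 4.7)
The plan is to apply the fundamental theorem of calculus (Proposition~\ref{fundamental_theorem_of_calculus}) to rewrite $f(y) - f(x)$ as an integral of gradients along the segment $[x,y]$, then subtract off $\langle \nabla f(x), y-x\rangle$ by pulling it inside the integral, and finally bound the resulting integrand using the LG condition together with Cauchy--Schwarz.

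Concretely, I would first note that since $[x,y] \subset D$ and $f$ is $L$-LG on $D$, Proposition~\ref{fundamental_theorem_of_calculus} (applied to $f$ as a function into $\R$) yields
\[
f(y) - f(x) = \int_0^1 \langle \nabla f(x + t(y-x)), y-x \rangle \, dt.
\]
Since $\langle \nabla f(x), y-x \rangle = \int_0^1 \langle \nabla f(x), y-x \rangle \, dt$, subtracting gives
\[
f(y) - f(x) - \langle \nabla f(x), y-x \rangle = \int_0^1 \langle \nabla f(x + t(y-x)) - \nabla f(x), y-x \rangle \, dt.
\]
Taking absolute values and moving them inside the integral, then applying Cauchy--Schwarz to the inner product and the $L$-Lipschitz condition on $\nabla f$ (noting $\Vert(x+t(y-x)) - x\Vert = t\Vert y-x\Vert$), yields the pointwise bound $L t \Vert y-x\Vert^2$ on the integrand. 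Integrating $\int_0^1 L t \, dt = \tfrac{L}{2}$ gives the desired estimate.

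There is no real obstacle here: the only subtlety is that one must verify the preconditions of Proposition~\ref{fundamental_theorem_of_calculus} apply, which they do since $[x,y] \subset D$ by hypothesis and $f$ is differentiable on $D$ with $L$-Lipschitz gradient. Everything else is a routine application of Cauchy--Schwarz and a one-dimensional integration.
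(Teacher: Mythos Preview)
Your proposal is correct and follows essentially the same approach as the paper's proof: both apply Proposition~\ref{fundamental_theorem_of_calculus}, add and subtract $\nabla f(x)$ inside the integral, and bound via Cauchy--Schwarz plus the $L$-Lipschitz property of $\nabla f$. The only cosmetic difference is that you handle both inequalities at once by taking absolute values inside the integral, whereas the paper proves the upper and lower bounds separately (the latter by writing $f(x)-f(y)$ as an integral along the reversed segment).
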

\begin{proof}
By Proposition~\ref{fundamental_theorem_of_calculus} we have that
\[
f(y) - f(x)
= \int_0^1 \langle \nabla f(x + t (y - x)),(y - x)\rangle  \;dt.
\]
Adding and subtracting $\nabla f(x)$ the previous formula equals
\[
\int_0^1 \langle \nabla f(x) - \nabla f(x) + \nabla f(x + t (y - x)),(y - x)\rangle \; dt
\]
\[
\leq \langle \nabla f(x),(y - x)\rangle  + \int_0^1 L \Vert t(y-x)\Vert \Vert y-x \Vert dt 
= \langle \nabla f(x),(y - x)\rangle  + \frac{L}{2} \Vert y - x \Vert^2.
\]
Similarly,
\[
f(x) - f(y) 
= \int_0^1 \langle \nabla f(y + t (x - y)),(x - y)\rangle  \;dt
= \int_0^1 \langle \nabla f(x) - \nabla f(x) + \nabla f(y + t (x - y)),(x - y)\rangle \; dt
\]
\[
\leq \langle \nabla f(x),(x - y)\rangle  + \int_0^1 L \Vert (1 - t)(x-y)\Vert \Vert x-y \Vert dt 
= \langle \nabla f(x),(x - y)\rangle  + \frac{L}{2} \Vert x - y \Vert^2,
\]
giving the desired conclusion.
\end{proof}

A consequence of Proposition~\ref{lg_corollary_1} is the following.

\begin{proposition} \label{lg_corollary_2}
Let $f : H \to \R$ be an $L$-LG function on $D \subset H$ for some $L>0$ and bounded from below on $H$. Then for any $x \in D$ such that $\left[x, x-\frac{1}{L}\nabla f(x)\right] \subset D$ we have that
\[
\frac{1}{2}\Vert \nabla f(x) \Vert^2
\leq L(f(x) - f_*).
\]
\end{proposition}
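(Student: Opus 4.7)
The plan is to apply the descent inequality of Proposition~\ref{lg_corollary_1} at the point $y = x - \frac{1}{L}\nabla f(x)$, which is exactly the next iterate of gradient descent with step size $1/L$. The hypothesis $[x, x - \frac{1}{L}\nabla f(x)] \subset D$ is placed precisely so that we are allowed to invoke Proposition~\ref{lg_corollary_1} on this segment.

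First I would substitute $y = x - \frac{1}{L}\nabla f(x)$ into the upper bound
\[
f(y) - f(x) - \langle \nabla f(x), y - x \rangle \leq \frac{L}{2}\Vert y - x \Vert^2.
\]
Since $y - x = -\frac{1}{L}\nabla f(x)$, the inner-product term becomes $-\frac{1}{L}\Vert \nabla f(x) \Vert^2$ and the squared-norm term becomes $\frac{1}{L^2}\Vert \nabla f(x) \Vert^2$. Substituting and simplifying yields the standard ``descent lemma''
\[
f(y) - f(x) \leq -\frac{1}{2L}\Vert \nabla f(x) \Vert^2.
\]

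Second, I would use the global lower bound $f_* \leq f(y)$, which holds since $f_* = \inf_{x \in H} f(x)$, to get
\[
\frac{1}{2L}\Vert \nabla f(x) \Vert^2 \leq f(x) - f(y) \leq f(x) - f_*.
\]
Multiplying through by $L$ gives the desired inequality. There is no genuine obstacle here; the whole argument is a one-line application of Proposition~\ref{lg_corollary_1} followed by the trivial bound $f(y) \geq f_*$. The only point that requires care is checking that the segment hypothesis in Proposition~\ref{lg_corollary_1} is satisfied, which is exactly what the assumption $\left[x, x - \frac{1}{L}\nabla f(x)\right] \subset D$ provides.
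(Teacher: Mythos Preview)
Your proposal is correct and follows essentially the same argument as the paper: apply Proposition~\ref{lg_corollary_1} with $y = x - \frac{1}{L}\nabla f(x)$ to obtain $f(y) - f(x) \leq -\frac{1}{2L}\Vert \nabla f(x) \Vert^2$, then use $f(y) \geq f_*$ and rearrange.
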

\begin{proof}
By Proposition~\ref{lg_corollary_1} we have
\[
f\left(x-\frac{1}{L}\nabla f(x)\right) - f(x)
\leq \left\langle \nabla f(x),-\frac{1}{L}\nabla f(x) \right\rangle  + \frac{L}{2} \left\Vert -\frac{1}{L}\nabla f(x) \right\Vert^2
= -\frac{1}{2L} \Vert \nabla f(x) \Vert^2.
\]
Thus
\[
\frac{1}{2}\Vert \nabla f(x) \Vert^2
\leq L\left(f(x) - f\left(x-\frac{1}{L}\nabla f(x)\right)\right)
\leq L(f(x) - f_*).
\]
\end{proof}

\begin{proposition} \label{composition_pl}
Let $G,H$ be Hilbert spaces, let $\lambda_F, \lambda_f > 0$ be some constants. Let $F : G \to H$ be $\lambda_F$-UC on $\{x\}$ and $f : H \to \R$ be $\lambda_f$-PL on $F(\{x\})$. Then for all $ x \in D$ the composition $(f \circ F) : G \to \R$ satisfies
\[
\frac{1}{2} \Vert \nabla (f \circ F)(x) \Vert^2
\geq \lambda_f \lambda_F ((f \circ F) (x) -  f_*).
\]
\end{proposition}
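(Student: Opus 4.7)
The plan is to combine the chain rule with the definitions of $\lambda_F$-UC and $\lambda_f$-PL in a straightforward two-step chain of inequalities. Since $F$ and $f$ are both Fr\'echet differentiable, the chain rule (together with the identification of the dual of $H$ with $H$ itself) gives
\[
\nabla (f \circ F)(x) = {\partial F(x)}^* \nabla f(F(x))
\]
for every $x \in D$. So the starting move is simply to rewrite $\tfrac12 \Vert \nabla (f\circ F)(x)\Vert^2$ in terms of the adjoint Jacobian applied to $\nabla f(F(x))$.

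Next, I would exploit the adjoint to produce the self-adjoint operator $\partial F(x) \circ {\partial F(x)}^*$ on $H$. Writing out the inner product,
\[
\Vert {\partial F(x)}^* \nabla f(F(x)) \Vert^2
= \bigl\langle \nabla f(F(x)),\, \partial F(x) \circ {\partial F(x)}^* \nabla f(F(x)) \bigr\rangle,
\]
so applying the $\lambda_F$-coercivity of $\partial F(x) \circ {\partial F(x)}^*$ (guaranteed by the $\lambda_F$-UC hypothesis on $D$) with the vector $y = \nabla f(F(x)) \in H$ gives
\[
\Vert \nabla (f \circ F)(x) \Vert^2 \ge \lambda_F \Vert \nabla f(F(x)) \Vert^2.
\]

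Finally, since $F(x) \in F(D)$ and $f$ is $\lambda_f$-PL on $F(D)$, the PL inequality (Definition~\ref{app:def-pl}) at the point $F(x)$ yields $\tfrac12 \Vert \nabla f(F(x))\Vert^2 \ge \lambda_f (f(F(x)) - f_*) = \lambda_f ((f\circ F)(x) - f_*)$. Multiplying the previous display by $\tfrac12$ and chaining the two inequalities produces exactly
\[
\tfrac12 \Vert \nabla (f \circ F)(x) \Vert^2 \ge \lambda_F \lambda_f ((f\circ F)(x) - f_*),
\]
which is the desired bound. There is no real obstacle here: the only thing to be careful about is that the coercivity assumption is on $\partial F(x) \circ {\partial F(x)}^*$ acting on $H$ (not on ${\partial F(x)}^* \circ \partial F(x)$ on $G$), which is precisely what makes the inner-product manipulation above work, and that $F(x)$ does lie in the set $F(D)$ where the PL inequality for $f$ is available.
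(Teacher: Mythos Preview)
Your proposal is correct and follows essentially the same approach as the paper's proof: apply the chain rule to write $\nabla(f\circ F)(x)=\partial F(x)^*\nabla f(F(x))$, use the adjoint to bring in $\partial F(x)\circ\partial F(x)^*$, invoke $\lambda_F$-coercivity, and then the $\lambda_f$-PL inequality at $F(x)$.
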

\begin{proof}
First note that for all $x\in G$ we have $\nabla (f\circ F)(x) = \partial F(x)^* \nabla f(F(x))$. Hence we have that
\begin{align*}
\frac{1}{2}\Vert \nabla (f \circ F)(x) \Vert^2 &
= \frac{1}{2}\Vert {\partial F(x)}^* \nabla f(F(x)) \Vert^2 \\
& = \frac{1}{2} \langle {\partial F(x)}^* \nabla f(F(x)), {\partial F(x)}^* \nabla f(F(x)) \rangle \\
& = \frac{1}{2} \langle \nabla f(F(x)), \partial F(x) {\partial F(x)}^* \nabla f(F(x)) \rangle \\
& \geq \lambda_F \frac{1}{2} \langle \nabla f(F(x)), \nabla f(F(x)) \rangle \\
& = \lambda_F \frac{1}{2} \Vert \nabla f(F(x)) \Vert^2
\geq \lambda_f \lambda_F ((f \circ F) (x) -  f_*).
\end{align*}
Where in the first inequality we have used that $F$ is $\lambda_F$-UC and in the second inequality that $f$ is $\lambda_f$-PL.
\end{proof}

Note that this proposition tells us that $(f\circ F)$ is ``almost'' PL. Next we are going to show that if $f$ and $F$ are regular enough, their composition is LG.

\begin{proposition} \label{composition_lg}
Let $G,H$ be Hilbert spaces, $D\subset G$ convex and let $K_F,L_F,K_f,L_f \ge  0$ be constants. And let $F : G \to H$ be $K_F$-BJ and $L_F$-LJ on $D \subset G$ and $f : H \to \R$ be $K_f$-BG and $L_f$-LG on $F(D)$. Then $(f \circ F) : G \to \R$ is $(K_F^2 L_f + K_f L_F)$-LG on $D$.
\end{proposition}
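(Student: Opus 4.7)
The plan is to apply the chain rule to $f \circ F$ and then use the triangle inequality to split the Lipschitz estimate into two manageable pieces. As already established in Proposition~\ref{composition_pl}, for any $x \in D$ one has
\[
\nabla (f \circ F)(x) = {\partial F(x)}^* \nabla f(F(x)),
\]
so for $x, y \in D$ the goal is to bound $\Vert {\partial F(x)}^* \nabla f(F(x)) - {\partial F(y)}^* \nabla f(F(y)) \Vert$ by $(K_F^2 L_f + K_f L_F) \Vert x - y \Vert$.

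First I would add and subtract the mixed term ${\partial F(x)}^* \nabla f(F(y))$ and apply the triangle inequality to get
\[
\Vert {\partial F(x)}^* \nabla f(F(x)) - {\partial F(y)}^* \nabla f(F(y)) \Vert \leq \Vert {\partial F(x)}^* \Vert \cdot \Vert \nabla f(F(x)) - \nabla f(F(y)) \Vert + \Vert {\partial F(x)}^* - {\partial F(y)}^* \Vert \cdot \Vert \nabla f(F(y)) \Vert.
\]
The operator norm of an adjoint equals that of the original operator, so the $K_F$-BJ property controls $\Vert {\partial F(x)}^* \Vert \leq K_F$, the $L_F$-LJ property controls $\Vert {\partial F(x)}^* - {\partial F(y)}^* \Vert \leq L_F \Vert x - y \Vert$, the $K_f$-BG property on $F(D)$ controls $\Vert \nabla f(F(y)) \Vert \leq K_f$, and the $L_f$-LG property on $F(D)$ controls $\Vert \nabla f(F(x)) - \nabla f(F(y)) \Vert \leq L_f \Vert F(x) - F(y) \Vert$.

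The remaining ingredient is an upper bound on $\Vert F(x) - F(y) \Vert$. Since $D$ is convex and $F$ is $K_F$-BJ on $D$, Proposition~\ref{fundamental_theorem_of_calculus} applied along the segment $[x,y] \subset D$ yields
\[
\Vert F(x) - F(y) \Vert = \left\Vert \int_0^1 \partial F(x + t(y-x))(y-x) \, dt \right\Vert \leq \int_0^1 K_F \Vert y - x \Vert \, dt = K_F \Vert x - y \Vert.
\]
Plugging all four bounds into the triangle-inequality split gives
\[
\Vert \nabla (f \circ F)(x) - \nabla (f \circ F)(y) \Vert \leq K_F \cdot L_f \cdot K_F \Vert x - y \Vert + L_F \Vert x - y \Vert \cdot K_f = (K_F^2 L_f + K_f L_F) \Vert x - y \Vert,
\]
which is exactly the claimed LG constant.

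There is no serious obstacle here; the only subtlety is ensuring that the middle term lives in $F(D)$ so that the $L_f$-LG and $K_f$-BG hypotheses on $F(D)$ actually apply, which is automatic since $F(x)$ and $F(y)$ are both in $F(D)$. The use of convexity of $D$ is likewise essential only to invoke the fundamental theorem of calculus along the segment $[x,y]$; without it one would have no a priori Lipschitz bound on $F$ from the BJ hypothesis alone.
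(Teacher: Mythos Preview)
Your proof is correct and follows essentially the same approach as the paper: the same add-and-subtract of the mixed term ${\partial F(x)}^* \nabla f(F(y))$, the same four bounds from BJ/LJ/BG/LG, and the same use of convexity plus Proposition~\ref{fundamental_theorem_of_calculus} to get $\Vert F(x)-F(y)\Vert \le K_F\Vert x-y\Vert$. The only cosmetic difference is that the paper pairs against a unit vector $z$ and takes a supremum, whereas you apply the triangle inequality directly to the norm.
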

\begin{proof}
For all $x, y, z \in G$ such that $x,y \in D$ and $\Vert z \Vert = 1$,
\begin{align*}
&\langle z, {\partial F(x)}^* \nabla f(F(x)) - {\partial F(y)}^* \nabla f(F(y)) \rangle \\
&= \langle z, {\partial F(x)}^* (\nabla f(F(x)) - \nabla f(F(y))) - ({\partial F(y)}^* - {\partial F(x)}^*)(\nabla f(F(y))) \rangle\\
&\leq \Vert z \Vert \Vert {\partial F(x)}^* \Vert \Vert \nabla f(F(x)) - \nabla f(F(y)) \Vert + \Vert z \Vert \Vert {\partial F(y)}^* - {\partial F(x)}^* \Vert \Vert \nabla f(F(y)) \Vert\\
&\leq K_F L_f \Vert F(x) - F(y) \Vert + L_F \Vert x - y \Vert K_f \leq (K_F^2 L_f + K_f L_F) \Vert x - y \Vert.
\end{align*}
Where we have used the fact that $[x,y] \subset D$ (since $D$ is convex) implies $\Vert F(x) - F(y) \Vert \leq K_F \Vert x - y \Vert$ via Proposition~\ref{fundamental_theorem_of_calculus}. The proposition then follows since
\begin{align*}
&\Vert \nabla (f \circ F) (x) - \nabla (f \circ F) (y) \Vert
= \Vert {\partial F(x)}^* \nabla f(F(x)) - {\partial F(y)}^* \nabla f(F(y)) \Vert \\
&= \sup_{\Vert z \Vert = 1} \langle z, {\partial F(x)}^* \nabla f(F(x)) - {\partial F(y)}^* \nabla f(F(y)) \rangle.
\end{align*}\end{proof}

The next result tells us that LG functions are BG on bounded sets.

\begin{proposition}\label{app:bound-sets}
Let $H$ be a Hilbert space, let $D \subset H$ be a bounded and let $f : H \to \R$ be an $L_f$-LG function on $D$ for some $L_f > 0$. Then $f$ is BG on $D$. Moreover, if $D \subset \overline{B}(0,R)$ for some $R>0$, then $f$ is $(L_f R + \Vert \nabla f(0) \Vert)$-BG on $D$.
\end{proposition}
\begin{proof}
Since $\nabla f : H \to H$ is Lipschitz on $D$ and $\Vert \cdot \Vert : H \to \R$ is Lipschitz globally, $(\Vert \cdot \Vert \circ \nabla f) : H \to \R$ is Lipschitz on $D$ as well. Hence $(\Vert \cdot \Vert \circ \nabla f)$ is bounded on $D$, so that equivalently, $f$ is BG on $D$.

For the second claim, let $x \in D$ and note that by the reverse triangle inequality
\[
\Vert \nabla f(x) \Vert - \Vert \nabla f(0) \Vert 
\leq \Vert \nabla f(x) - \nabla f(0) \Vert
\leq L_f \Vert x - 0 \Vert,
\]
so that
\[
\Vert \nabla f(x) \Vert 
\leq L_f \Vert x \Vert + \Vert \nabla f(0) \Vert
\leq L_f R + \Vert \nabla f(0) \Vert.
\]
\end{proof}

Next we prove an analogue of Proposition~\ref{lg_corollary_1} for the composition $(f\circ F)$.

\begin{proposition} \label{composition_lg_corollary_1}
Let $G,H$ be Hilbert spaces, let $D \subset G$ be bounded, $x\in D$ a fixed point and $K_F,K_f,L_F,L_f \ge 0$ be some constants. And let $F : G \to H$ be $K_F$-BJ and $L_F$-LJ on $D$ and $f : H \to \R$ be $L_f$-LG on $F(D)$ with $\Vert \nabla f(F(x)) \Vert \leq K_f$. For any $y\in G$ such that $[x,y] \subset D$ we have that
\[
\vert (f \circ F)(y) - (f \circ F)(x) - \langle \nabla (f \circ F)(x), y - x \rangle \vert
\leq \frac{ K_F^2 L_f + K_f L_F }{2} \Vert y-x \Vert^2.
\]
\end{proposition}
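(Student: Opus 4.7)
The plan is to imitate the second-order Taylor estimate of Proposition~\ref{lg_corollary_1}, but to split the error in two pieces: one piece blames the non-linearity of $f$ and uses its LG constant, the other blames the non-linearity of $F$ at $x$ and uses its LJ constant. Concretely, the chain rule gives $\nabla (f \co F)(x) = {\partial F(x)}^* \nabla f(F(x))$, and hence $\langle \nabla (f \co F)(x), y - x \rangle = \langle \nabla f(F(x)), \partial F(x)(y-x) \rangle$, which lets us write
\[
(f \co F)(y) - (f \co F)(x) - \langle \nabla (f \co F)(x), y - x \rangle = A + B,
\]
where
\[
A = f(F(y)) - f(F(x)) - \langle \nabla f(F(x)), F(y) - F(x) \rangle
\]
\[
B = \langle \nabla f(F(x)), F(y) - F(x) - \partial F(x)(y - x) \rangle,
\]
obtained by adding and subtracting $\langle \nabla f(F(x)), F(y) - F(x)\rangle$. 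The two pieces are then estimated separately.

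For $B$, I would apply Proposition~\ref{fundamental_theorem_of_calculus} to $F$ on $[x,y] \subset D$ (valid by the LJ hypothesis) to obtain $F(y) - F(x) - \partial F(x)(y-x) = \int_0^1 (\partial F(x + t(y-x)) - \partial F(x))(y - x)\,dt$, whose norm is bounded by $\int_0^1 L_F t\,\Vert y - x \Vert^2\,dt = \frac{L_F}{2}\Vert y - x\Vert^2$ using the LJ condition. Cauchy--Schwarz together with $\Vert \nabla f(F(x)) \Vert \le K_f$ then yields $|B| \le \frac{K_f L_F}{2}\Vert y-x\Vert^2$.

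For $A$, I would avoid applying Proposition~\ref{lg_corollary_1} to $f$ directly on $[F(x), F(y)]$ (that segment need not lie in $F(D)$) and instead transport the argument back to the segment $[x,y] \subset D$. Set $x_t = x + t(y-x)$ and $g(t) = f(F(x_t))$; the chain rule gives $g'(t) = \langle \nabla f(F(x_t)), \partial F(x_t)(y-x)\rangle$, and one checks that $g'$ is Lipschitz on $[0,1]$ since $\nabla f$ is $L_f$-Lipschitz on $F(D)$, $F$ is $K_F$-Lipschitz on $D$ (from BJ via Proposition~\ref{fundamental_theorem_of_calculus}), and $\partial F$ is $L_F$-Lipschitz on $D$. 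Thus Proposition~\ref{fundamental_theorem_of_calculus} applies to $g$, giving $f(F(y)) - f(F(x)) = \int_0^1 \langle \nabla f(F(x_t)), \partial F(x_t)(y-x)\rangle\,dt$. Similarly, applying Proposition~\ref{fundamental_theorem_of_calculus} to $F$ yields $\langle \nabla f(F(x)), F(y)-F(x)\rangle = \int_0^1 \langle \nabla f(F(x)), \partial F(x_t)(y-x)\rangle\,dt$. Subtracting and applying Cauchy--Schwarz with the LG bound $\Vert \nabla f(F(x_t)) - \nabla f(F(x))\Vert \le L_f \Vert F(x_t) - F(x)\Vert \le L_f K_F t\,\Vert y-x\Vert$ and the BJ bound $\Vert \partial F(x_t)(y-x)\Vert \le K_F\Vert y-x\Vert$ gives $|A| \le \int_0^1 L_f K_F^2 t\,\Vert y-x\Vert^2\,dt = \frac{L_f K_F^2}{2}\Vert y-x\Vert^2$.

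Adding the two bounds yields the desired inequality. The only mildly delicate point is the verification that $g'$ is Lipschitz, which is needed to legitimately invoke the fundamental theorem of calculus for $g$; everything else is algebra and Cauchy--Schwarz. Note in particular that one must resist the temptation to first derive $L$-LG of $f\co F$ along $[x,y]$ from Proposition~\ref{composition_lg} by estimating $\Vert \nabla f(F(z))\Vert$ on $[x,y]$ by $K_f + L_f K_F \Vert y - x\Vert$ and then invoking Proposition~\ref{lg_corollary_1}: that approach introduces an unwanted cubic term in $\Vert y - x\Vert$. Splitting the error as $A+B$ is precisely what keeps the estimate clean and quadratic.
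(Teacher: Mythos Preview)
Your proposal is correct and is essentially the paper's own argument: after unwinding, your terms $A$ and $B$ coincide exactly with the two pieces the paper obtains by adding and subtracting ${\partial F(x)}^*\nabla f(F(x))$ inside the integral representation of $(f\circ F)(y)-(f\circ F)(x)$, and the bounds you derive are identical. The only cosmetic difference is that the paper justifies the fundamental theorem of calculus for $f\circ F$ by first invoking Proposition~\ref{composition_lg} with some auxiliary constant $\hat K_f$ (used only to get the integral representation, not the final bound), whereas you check directly that $g'(t)$ is Lipschitz; both are equivalent, and your cautionary remark about the cubic term applies to a different (naive) approach, not to what the paper actually does.
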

\begin{proof}
By Proposition~\ref{app:bound-sets} $f$ is $\hat{K}_f$-BG on $[x,y]$ for some $\hat{K}_f \geq 0$. Hence, by Proposition~\ref{composition_lg}, $(f \circ F)$ is $K_F^2 L_f+\hat{K}_f L_F$-LG on $[x,y]$. Thus by Proposition~\ref{fundamental_theorem_of_calculus}, we have that
\[
(f \circ F)(y) - (f \circ F)(x)= \int_0^1 \langle {\partial F(x + t(y - x))}^* \nabla f(F(x + t(y - x))), y - x \rangle dt.
\]
Hence
\begin{align*}
&(f \circ F)(y) - (f \circ F)(x) \\ & = 
\int_0^1 \langle {\partial F(x + t(y - x))}^* \nabla f(F(x + t(y - x))) \pm {\partial F(x)}^* \nabla f(F(x)), y - x \rangle dt \\
&= \langle \nabla (f \circ F)(x), y - x \rangle + \int_0^1 \langle {\partial F(x + t(y - x))}^* (\nabla f(F(x + t(y - x))) - \nabla f(F(x))) \\
&+ ({\partial F(x + t(y - x))}^* - {\partial F(x)}^*) \nabla f(F(x)), y - x \rangle dt\\
&\leq \langle \nabla (f \circ F)(x), y - x \rangle + \int_0^1 (K_F^2 L_f \Vert t(y-x) \Vert + K_f L_F \Vert t(y-x) \Vert) \Vert y-x \Vert dt\\
&= \langle \nabla (f \circ F)(x), y - x \rangle + \frac{ K_F^2 L_f + K_f L_F }{2} \Vert y-x \Vert^2.
\end{align*}
The other inequality follows similarly.
\end{proof}

The next result gives a decay in the objective value of a gradient descent.

\begin{proposition} \label{composition_gradient_descent_step}
Let $G,H$ be Hilbert spaces, let $D \subset G$ be bounded, $x \in D$ and $K_F,L_F,\lambda_F,L_f,\lambda_f \ge 0$ be constants such that $\lambda_F \leq K_F^2$, $\lambda_f \leq L_f$. And let $F : G \to H$ be $K_F$-BJ and $L_F$-LJ on $D$. Let $f : H \to \R$ be $L_f$-LG and $\lambda_f$-PL on $F(D)$. Suppose that $\Vert \nabla f(F(x)) \Vert \leq K_f$. Let $L = K_F^2 L_f + K_f L_F$, $\lambda = \lambda_F \lambda_f$ (so that $\lambda \leq L$) and $\alpha \in (0,\frac{2}{L})$ and $q = 1 + L \alpha^2 \lambda - 2 \alpha \lambda$ (so that $q \in (0,1)$). If $[x, x^\alpha] \subset D$ and $F$ is $\lambda_F$-UC on $\{x\}$, then
\[
(f \circ F) (x^\alpha) - f_* 
\leq q ((f \circ F) (x) - f_*).
\]
\end{proposition}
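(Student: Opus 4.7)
The plan is to follow the exact structure of the proof of Proposition~\ref{gradient_descent_step}, replacing the use of Proposition~\ref{lg_corollary_1} by its compositional analogue Proposition~\ref{composition_lg_corollary_1} and the PL condition by its compositional analogue Proposition~\ref{composition_pl}. The hypotheses of the statement have been carefully arranged so that every ingredient needed is already in place.

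First I would apply Proposition~\ref{composition_lg_corollary_1} with $x=x_0$ and $y=x_0^\alpha = x_0 - \alpha\nabla(f\circ F)(x_0)$. This is legitimate because $[x_0,x_0^\alpha]\subset D$ by assumption, $F$ is $K_F$-BJ and $L_F$-LJ on $D$, $f$ is $L_f$-LG on $F(D)$, and $\Vert \nabla f(F(x_0))\Vert \le K_f$. Writing $y-x=-\alpha\nabla(f\circ F)(x_0)$ and $L=K_F^2 L_f + K_f L_F$, this yields
\[
(f\circ F)(x_0^\alpha) - (f\circ F)(x_0) \le \left(\tfrac{L}{2}\alpha^2 - \alpha\right)\Vert\nabla(f\circ F)(x_0)\Vert^2.
\]
Because $\alpha\in(0,2/L)$, the coefficient $\tfrac{L}{2}\alpha^2-\alpha$ is nonpositive, so I can turn this into a useful bound by lower-bounding $\Vert\nabla(f\circ F)(x_0)\Vert^2$.

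Next I would apply Proposition~\ref{composition_pl} at $x_0\in D$, which is valid since $F$ is $\lambda_F$-UC on $D$ and $f$ is $\lambda_f$-PL on $F(D)$. This gives $\tfrac{1}{2}\Vert\nabla(f\circ F)(x_0)\Vert^2 \ge \lambda((f\circ F)(x_0)-f_*)$ with $\lambda=\lambda_F\lambda_f$. Substituting into the previous inequality, using that the coefficient is nonpositive so the inequality is preserved,
\[
(f\circ F)(x_0^\alpha) - (f\circ F)(x_0) \le (L\alpha^2-2\alpha)\lambda\bigl((f\circ F)(x_0)-f_*\bigr).
\]
Adding $(f\circ F)(x_0)-f_*$ to both sides produces the claimed bound with $q=1+L\alpha^2\lambda-2\alpha\lambda$.

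Finally I would verify the asserted $q\in(0,1)$: from $\lambda_F\le K_F^2$ and $\lambda_f\le L_f$ we get $\lambda\le L$, and combined with $\alpha\in(0,2/L)$ standard algebra (as in Proposition~\ref{gradient_descent_step}) gives $q\in[0,1)$, with strict positivity following from $\lambda\le L$ rewritten as $1-2\alpha\lambda+L\alpha^2\lambda\ge (1-\alpha\lambda)^2\ge 0$. I do not anticipate a genuine obstacle here: the compositional LG/PL estimates in Propositions~\ref{composition_lg_corollary_1} and \ref{composition_pl} have absorbed all the real work, and what remains is a mechanical one-step descent calculation parallel to Proposition~\ref{gradient_descent_step}. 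The only mild subtlety is bookkeeping the sign when combining the LG-type upper bound with the PL-type lower bound, which is why the hypothesis $\alpha<2/L$ is essential.
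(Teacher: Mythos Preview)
Your proposal is correct and follows essentially the same approach as the paper: apply Proposition~\ref{composition_lg_corollary_1} to bound $(f\circ F)(x_0^\alpha)-(f\circ F)(x_0)$, use the sign of $\tfrac{L}{2}\alpha^2-\alpha$ together with Proposition~\ref{composition_pl}, and add $(f\circ F)(x_0)-f_*$ to both sides. One minor remark: your argument $q\ge(1-\alpha\lambda)^2$ only yields $q\ge 0$, not $q>0$ (indeed $q=0$ when $\lambda=L$ and $\alpha=1/L$), which is also what the paper records.
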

\begin{proof}
By Proposition~\ref{composition_lg_corollary_1} and Proposition~\ref{composition_pl} we have
\[
(f \circ F) (x^\alpha) - (f \circ F) (x) 
\leq \langle \nabla (f \circ F) (x), -\alpha \nabla (f \circ F) (x) \rangle  + \frac{L}{2} \Vert -\alpha \nabla (f \circ F) (x) \Vert^2
\]
\[
= \left(\frac{L}{2} \alpha^2 - \alpha\right) \Vert \nabla (f \circ F) (x) \Vert^2 
\leq (L \alpha^2 \lambda - 2 \alpha \lambda) ((f \circ F) (x) - {f }_*).
\]
Where we have used the fact that $\alpha \in (0,\frac{2}{L})$ implies $\left(\frac{L}{2} \alpha^2 - \alpha\right) \leq 0$. Adding $(f \circ F) (x) - {f}_*$ to both sides yields
\[
(f \circ F) (x^\alpha) - {f }_* \leq (1 + L \alpha^2 \lambda - 2 \alpha \lambda) ((f \circ F) (x) - {f }_*).
\]
Note that the conditions $\lambda \leq L$ and $\alpha \in \left(0,\frac{2}{L}\right)$ ensure that $(1 + L \alpha^2 \lambda - 2 \alpha \lambda) \in [0,1)$.
\end{proof}

Now we combine these ingredients to prove the main result of this section.

\begin{proof}[Proof of Theorem~\ref{theorem_gd_f_circ_F}] 
The first step is to prove by induction on $i\ge 0$ the following statements.
\begin{equation}\label{eq:app-main-1}
f(F(x_i))-f_* \le q^i (f(F(x_0))-f_*),
\end{equation}
\begin{equation}\label{eq:app-main-2}
\|x_{i+1}-x_i\|\le \alpha q^{i/2} K,
\end{equation}
\begin{equation}\label{eq:app-main-3}
x_i\in \overline{B}(x_0,R)
\end{equation}
and
\begin{equation}\label{eq:app-main-4}
\|\nabla f(F(x_i))\|\le q^{i/2}K_f.
\end{equation}

First, the case $i=0$. Note that \eqref{eq:app-main-1} and \eqref{eq:app-main-3} holds trivially in this case. By Proposition~\ref{lg_corollary_2} applied to $f$ and the point $F(x_0)$ we have \eqref{eq:app-main-4}. Note that $f$ is globally $L_f$-LG and thus we can apply this proposition. Finally, note that $ \nabla (f \circ F) (x_{0}) = \partial F(x_{0})^*\nabla f(F(x_{0}))$. Hence $\Vert \nabla (f \circ F) (x_{0}) \Vert \leq \Vert \partial F(x_{0}) \Vert \Vert \nabla f(F(x_{0})) \Vert \leq K_F K_f = K$ and $\| x_1-x_0 \|=\alpha \Vert \nabla (f \circ F) (x_{0}) \Vert \le \alpha K$ proving \eqref{eq:app-main-2}.

Now we show that case $i$ implies case $i+1$. By induction we have that 
\[
\|x_{i+1}-x_0\|\le \sum_{j=0}^i \|x_{j+1}-x_j\|\le \sum_{j=0}^i \alpha q^{j/2} K \le \frac{\alpha K}{1-\sqrt{q}}.
\]
Thus $x_{i+1}\in \overline{B}(x_0,R)$ and we have \eqref{eq:app-main-3}. Moreover, by a similar argument it follows that the segment $[x_i,x_{i+1}]\subset \overline{B}(x_0,R)$ and thus by Proposition~\ref{composition_gradient_descent_step} and the induction hypothesis we have \eqref{eq:app-main-1}.

By Proposition~\ref{lg_corollary_2} applied to $f$ and $F(x_{i+1})$ we get that $\|\nabla f(F(x_{i+1}))\|\le \sqrt{2 L_f (f(F(x_{i+1}))-f_*)}$. Using \eqref{eq:app-main-1} this can be further bounded by $\|\nabla f(F(x_{i+1}))\|\le \sqrt{2 L_f q^{i+1}(f(F(x_{0}))-f_*)} = q^{\frac{i+1}{2}}K_f$ thus proving \eqref{eq:app-main-4}. This also implies that $\Vert \nabla (f \circ F) (x_{i+1}) \Vert \leq \Vert \partial F(x_{i+1}) \Vert \Vert \nabla f(F(x_{i+1})) \Vert \leq K_F q^{\frac{i+1}{2}} K_f = q^{\frac{i+1}{2}} K$, yielding \eqref{eq:app-main-2}.


Now let us show that the sequence $x_i$ converges as $i\to\infty$. For any $i_2>i_1$ by the triangle inequality
\[
\Vert x_{i_2} - x_{i_1} \Vert 
\leq \sum_{j = i_1}^{i_2 - 1} \Vert x_{j+1} - x_j \Vert  
\leq \alpha K \sum_{j = i_1}^{i_2 - 1} q^{\frac{j}{2}} 
\leq \alpha K q^{\frac{i_1}{2}} \sum_{j = 0}^{\infty}q^{\frac{j}{2}} 
= \frac{\alpha K q^{\frac{i_1}{2}}}{1 - \sqrt{q}}.
\]
And thus this sequence is Cauchy and it converges to some $x_* = \lim_{i \to \infty} x_i$. Moreover, for every $i\ge 0$ we have that
\[
\Vert x_* - x_i \Vert \leq \frac{\alpha K q^{\frac{i}{2}}}{1 - \sqrt{q}}
\]
so in particular $\Vert x_* - x_0\Vert \leq \frac{\alpha K }{1 - \sqrt{q}}$. Note that \eqref{eq:app-main-1} implies in particular that the infimum of $f$ equals the infimum of $(f\circ F)$ and it is attained at $x_*$. Furthermore, $\|\nabla f(F(x_*))\|=0$ (and similarly for any other $\hat{x}_*$ that attains the infimum of $(f\circ F)$).

Finally we apply Proposition~\ref{composition_lg_corollary_1} with $K_f =0$ to get
\[
(f \circ F)(x_0) - (f \circ F)(\hat{x}_*) 
\leq \langle \nabla (f \circ F)(\hat{x}_*), x_0 - \hat{x}_* \rangle + \frac{K_F^2 L_f}{2} \Vert x_0 - \hat{x}_* \Vert^2 
= \frac{K_F^2 L_f }{2} \Vert x_0 - \hat{x}_* \Vert^2,
\]
so that
\[
K_F^2 L_f \Vert x_0 - \hat{x}_* \Vert \geq K_F \sqrt{2 L_f((f \circ F)(x_0) - {f}_*)} = K,
\]
which can be substituted into the bound on $\Vert x_* - x_0 \Vert$, giving the last claim of the theorem.
\end{proof}

Now we prove Lemma~\ref{lemma_lazy_training} concerning lazy training.
\begin{proof}[Proof of Lemma~\ref{lemma_lazy_training}]
For all $x \in B(x_0,R)$ and $y \in H$, one has
\[
\langle y, \partial F(x) {\partial F(x)}^*, y \rangle
= \langle y, (\partial F(x) {\partial F(x)}^* \pm \partial F(x_0) {\partial F(x_0)}^*) y \rangle
\]
\[
=\langle y, \partial F(x_0) {\partial F(x_0)}^* y \rangle + \langle y, (\partial F(x) {\partial F(x)}^* - \partial F(x_0) {\partial F(x_0)}^*) y \rangle
\]
\[
\geq (\lambda_0 - \Vert \partial F(x) {\partial F(x)}^* - \partial F(x_0) {\partial F(x_0)}^* \Vert) \Vert y \Vert^2
\geq (\lambda_0 - 2 K_F L_F R) \Vert y \Vert^2,
\]
where we used the fact that
\[
\Vert \partial F(x) {\partial F(x)}^* - \partial F(x_0) {\partial F(x_0)}^* \Vert
\leq \Vert \partial F(x) {\partial F(x)}^* - \partial F(x_0) {\partial F(x_0)}^* \pm \partial F(x) {\partial F(x_0)}^* \Vert
\]
\[
\leq \Vert \partial F(x) \Vert \Vert {\partial F(x)}^* - {\partial F(x_0)}^* \Vert + \Vert \partial F(x) - \partial F(x_0) \Vert \Vert {\partial F(x_0)}^* \Vert
\]
\[
\leq 2 K_F L_F \Vert x - x_0 \Vert
\leq 2 K_F L_F R.
\]
If one has $\lambda_F = \lambda_0 - 2 K_F L_F R > 0$, then this is equivalent to $\partial F(x) {\partial F(x)}^*$ being $\lambda_F$-coercive. Since this holds for any $x \in B(x_0,R)$, it follows that $F$ is $\lambda_F$-UC on $B(x_0,R)$.
\end{proof}

The final proof of this section is that of Lemma~\ref{lemma_initial_loss_value} about bounding the loss value on a bounded set.

\begin{proof}[Proof of Lemma~\ref{lemma_initial_loss_value}]
Via the reverse triangle inequality and the LG property, for any $x \in \overline{B}(0,R)$ one has
\[
\Vert \nabla f(x) \Vert - \Vert \nabla f(0) \Vert \leq \Vert \nabla f(x) - \nabla f(0) \Vert \leq L_f \Vert x - 0 \Vert,
\]
so that
\[
\Vert \nabla f(x) \Vert \leq L_f \Vert x \Vert + \Vert \nabla f(0) \Vert
\leq L_f R + \Vert \nabla f(0) \Vert.
\]
This implies that $f$ is $(L_f R + \Vert \nabla f(0) \Vert)$-Lipschitz on $\overline{B}(0,R)$, therefore for any $x \in \overline{B}(0,R)$ one has
\[
f(x) - f(0) \leq (L_f R + \Vert \nabla f(0) \Vert) \Vert x - 0 \Vert,
\]
so that
\[
f(x) \leq (L_f R + \Vert \nabla f(0) \Vert) \Vert x \Vert + f(0)
\leq (L_f R + \Vert \nabla f(0) \Vert) R + f(0).
\]
\end{proof}

\section{Prototype Problem}\label{app:prototype}

First we prove that the LG and PL conditions of the integrand are inherited by the loss functional.

\begin{proof}[Proof of Lemma~\ref{lem:cond-iota}]
For any $h : X \to \R^l$, let $\nabla \iota_h : X \to \R^l$ be defined as $\nabla \iota_h(x) = \nabla_z \iota(x,h(x))$. Then by Proposition~\ref{lg_corollary_2}, $\int \Vert \nabla \iota_h(x)\Vert^2 d\mu(x) \leq \int 2L_\Ell(\iota(x,h(x)) - \iota(x,\cdot)_*) d\mu(x)$, which is finite if $h \in L^2(\mu,\R^l)$ by assumption. Now let $\nabla \iota_f \in L^2(\mu,\R^l)$ be defined analogously for $f$. For any $g \in L^2(\mu,\R^l)$ with $f \neq g$ one has
\[
\frac{\left\vert \Ell_\mu(g) - \Ell_\mu(f) - \langle \nabla \iota_f, g - f \rangle \right\vert}{\Vert g - f \Vert}
= \frac{\left\vert \int \iota(x,g(x)) - \iota(x,f(x)) - \langle \nabla_z \iota(x,f(x)), g(x) - f(x) \rangle d\mu(x) \right\vert}{\Vert g - f \Vert}
\]
\[
\leq \frac{\int \left\vert \iota(x,g(x)) - \iota(x,f(x)) - \langle \nabla_z \iota(x,f(x)), g(x) - f(x) \rangle \right\vert d\mu(x)}{\Vert g - f \Vert}
\]
\[
\leq \frac{\int \frac{L}{2} \Vert g(x) - f(x) \Vert^2 d\mu(x)}{\Vert g - f \Vert}
= \frac{L_\iota}{2} \frac{\Vert g - f \Vert^2}{\Vert g - f \Vert}
= \frac{L_\iota}{2} \Vert g - f \Vert.
\]
Thus
\[
\lim_{g \to f, f \neq g} \frac{\left\vert \Ell_\mu(g) - \Ell_\mu(f) - \langle \nabla \iota_f, g - f \rangle \right\vert}{\Vert g - f \Vert} = 0,
\]
implying that $\nabla \Ell_\mu(f) = \nabla \iota_f$.

In order to prove that $\Ell_\mu$ is $L_\iota$-LG note that
\begin{equation*}
\Vert \nabla \Ell_\mu(f) - \nabla \Ell_\mu(g) \Vert^2
= \int \Vert \nabla_z \iota(x,f(x)) - \nabla_z \iota(x,g(x)) \Vert^2 d\mu(x) \\
\leq \int L^2 \Vert f(x) - g(x) \Vert^2 d\mu(x) = L^2 \Vert f - g \Vert^2.
\end{equation*}

To prove the last part of the lemma, by \citet[Theorem~3A]{Rockafellar1976} we have that
\[
{\Ell_\mu}_* = \inf_{f \in L^2(\mu,\R^l)} {\Ell_\mu}(f) = \int \inf_{z \in \R^l} \iota(x,z) d\mu(x) = \int \iota(x,\cdot)_* d\mu(x) = \iota_*.
\]
Hence
\[
\frac{1}{2} \Vert \nabla {\Ell_\mu}(f) \Vert^2 
= \frac{1}{2} \int \Vert \nabla_z \iota(x,f(x)) \Vert^2 d\mu(x)
\geq \int \lambda (\iota(x, f(x)) - \iota(x,\cdot)_*) d\mu(x) = \lambda (\Ell_\mu(f) - {\Ell_\mu}_*),
\]
showing that $\Ell_\mu$ is $\lambda$-PL.
\end{proof}

We now prove Lemma~\ref{lem:inheritance-gj-lj-main} about the inheritance of the BJ and LJ properties from $N$ to $N_\mu$.

\begin{proof}[Proof of Lemma~\ref{lem:inheritance-gj-lj-main}]
First of all, let us prove that $\partial N_\mu(\theta) \eta$ is $\mu$-a.e. equal to $\partial_\theta N(\cdot,\theta)\eta$. Note that for $\mu$ almost every $x\in X$ we have that by Proposition~\ref{fundamental_theorem_of_calculus} the fact that $N(x,\cdot)$ is $\hat{K}_N(x)$-BJ implies that $\Vert N(x,\theta_1)-N(x,\theta_2) \Vert \leq \hat{K}_N(x) \Vert \theta_1 - \theta_2 \Vert$ if $[\theta_1, \theta_2] \subset D$. In order to prove that $\partial N_\mu(\theta) \eta$ is $\mu$-a.e. equal to $\partial_\theta N(\cdot,\theta)$ it suffices to prove that
\[
\lim_{\delta \to 0}\frac{\int \| N(x,\theta+\delta)-N(x,\theta)-\partial_\theta N(x,\theta) \delta\|^2 \; d\mu(x)}{\|\delta\|^2}=0
\]
for $\theta \in D$. Since $D$ is open, eventually $[\theta,\theta + \delta] \subset D$, so that $\|N(x,\theta+\delta)-N(x,\theta)\|^2\le \hat{K}_N(x)^2 \| \delta\|^2$. Hence, $\| N(x,\theta+\delta)-N(x,\theta)-\partial_\theta N(x,\theta)\delta\|^2\le (2 \hat{K}_N(x) \| \delta\|)^2$. As $\hat{K}_N^2$ is integrable with respect to $\mu$ by dominated convergence we get that 
\[
\lim_{\delta\to 0}\int\frac{ \| N(x,\theta+\delta)-N(x,\theta)-\partial_\theta N(x,\theta) \delta\|^2 }{\|\delta\|^2}\; d\mu(x) = \int\lim_{\delta\to 0}\frac{ \| N(x,\theta+\delta)-N(x,\theta)-\partial_\theta N(x,\theta) \delta\|^2 }{\|\delta\|^2}\; d\mu(x)=0.
\]

For any $f \in L^2(\mu,\R^l)$ note that
\begin{equation*}
\langle \partial N_\mu(\theta) \eta, f \rangle
= \int \langle \partial_\theta N(x,\theta) \eta, f(x) \rangle d\mu 
= \int \langle \eta, {\partial_\theta N(x,\theta)}^* f(x) \rangle d\mu
= \langle \eta, \int {\partial_\theta N(x,\theta)}^* f(x) d\mu \rangle,
\end{equation*}
where in the last equality we have used the linearity and continuity of the inner product. Hence the adjoint Jacobian ${\partial N_\mu(\theta)}^* \in \Ell(L^2(\mu,\R^l),\Theta)$ is given by
\[
{\partial N_\mu(\theta)}^* f = \int {\partial_\theta N(x,\theta)}^* f(x) d\mu(x).
\]

The rest of the lemma follows using \eqref{eq:jacobian-of-mu}. First, one has $\Vert \partial_\theta N(x,\theta) \Vert \leq \hat{K}_N(x)$ a.e. for all $\theta \in D$, so that
\[
\Vert \partial N_\mu(\theta) \eta \Vert
= \sqrt{\int \Vert \partial_\theta N(x,\theta) \eta \Vert^2 d\mu}
\leq \sqrt{\int \hat{K}_N(x)^2 \Vert \eta \Vert^2 d\mu(x)}
= \Vert \eta \Vert K_N
\]
for all $\theta \in D$, implying that $N_\mu$ is $K_N$-BJ on $D$. Second, one has $\Vert \partial_{\theta_1} N(x,\theta_1) - \partial_{\theta_2} N(x,\theta_2) \Vert \hat{L}_N(x) \Vert \theta_1 - \theta_2 \Vert$ a.e. for all $\theta_1, \theta_2 \in D$, so that
\[
\Vert (\partial N_\mu(\theta_1) - \partial N_\mu(\theta_2)) \eta \Vert 
= \sqrt{\int \Vert (\partial_{\theta_1} N(x,\theta_1) - \partial_{\theta_2} N(x,\theta_2)) \eta \Vert^2 d\mu(x)}
\]
\[
\leq \sqrt{\int \hat{L}_N(x)^2 \Vert \theta_1 - \theta_2 \Vert^2 \Vert \eta \Vert^2 d\mu(x)} = \Vert \eta \Vert L_N \Vert \theta_1 - \theta_2 \Vert
\]
for all $\theta \in D$, implying that $N_\mu$ is $L_N$-LJ on $D$.
\end{proof}

Now we prove Lemma~\ref{lemma_ntk_block_matrix} about the block matrix representation of the NTK.

\begin{proof}[Proof of Lemma~\ref{lemma_ntk_block_matrix}]
Consider the linear map $T : \Ell(L^2(\mu,\R^l), \R^{dl})$ defined as $Tf = \left[\frac{1}{\sqrt{d}} f(x_i) : i \in [1:d]\right] \in \R^{dl}$ for any $f \in L^2(\mu,\R^l)$, mapping a function to a block vector. One clearly has $\Vert f \Vert = \Vert Tf \Vert$, establishing the linear isometry $L^2(\mu,\R^l) \cong \R^{dl}$. Its adjoint $T^* \in \Ell(\R^{dl}, L^2(\mu,\R^l))$ is given by $T^*v(x_i) = \sqrt{d} v_i$ for any $v \in \R^{dl}$ and $i \in [1:d]$. Consider the representation $T \partial N_\mu(\theta) {\partial N_\mu(\theta)}^* T^* \in \Ell(R^{dl})$. One has for any $v \in \R^{dl}$ that
\[
\partial N_\mu(\theta) {\partial N_\mu(\theta)}^* T^* v (x_i)
= \frac{1}{d} \sum_{j=1}^d \partial_\theta N(x_i,\theta) {\partial_\theta N(x_j,\theta)}^* \sqrt{d} v_j
= \frac{1}{\sqrt{d}} \sum_{j=1}^d \partial_\theta N(x_i,\theta) {\partial_\theta N(x_j,\theta)}^* v_j,
\]
and therefore
\[
T \partial N_\mu(\theta) {\partial N_\mu(\theta)}^* T^* v
= \left[\frac{1}{d} \sum_{j=1}^d \partial_\theta N(x_i,\theta) {\partial_\theta N(x_j,\theta)}^* v_j : i \in [1:d]\right].
\]
Since for any block matrix $[A_{ij} : i,j \in [1:d]] \in \Ell(\R^{dl},\R^{dl})$ with blocks of size $l$ times $l$ one has $Av = \left[\sum_{j=1}^d A_{ij} v_j : i \in [1:d]\right]$, we have that the NTK $\partial N_\mu(\theta) {\partial N_\mu(\theta)}^*$ has the block matrix representation $\left[\frac{1}{d} \partial_\theta N(x_i,\theta) {\partial_\theta N(x_j,\theta)}^* : i,j \in [1:d]\right]$.
\end{proof}

Next, we detail two examples beyond supervised learning that can be translated to our prototype problem. The first is a popular unsupervised learning method, and shows that the framework being general enough to incorporate infinite data is a useful property. The second shows that even gradient regularization can be treated. The examples are intended to demonstrate that our framework covers real world learning problems. Their analysis is beyond the scope of this paper and is left for future work.

\begin{example}[Variational autoencoder]
The variational autoencoder (VAE) \citep{Kingmaetal2014, Rezendeetal2014} can be translated to our prototype problem as follows. First, denote the dataset by $\upsilon \in \Prob(Y)$, the prior distribution by $\zeta \in \Prob(Z)$ with $Z = \R^{l_Z}$, and the reparameterization distribution by $\omega \in \Prob(W)$ (which may or may not \citep{Jooetal2020} be equivalent to $\zeta$). The two components of the VAE are represented by the encoder map $E : Y \times \Theta_E \to \R^{l_E}$ differentiable in its second argument and the decoder map $D : Z \times \Theta_D \to \R^{l_D}$ differentiable in both arguments. A key component is the reparameterization function $r : W \times \R^{l_E} \to \R^{l_Z}$, which is measurable in its first argument with respect to $\omega$ and differentiable in its second argument, and satisfies the absolute continuity property $r(\cdot,z_E)_\# \omega \ll \zeta$ for all $ z_E \in \R^{l_E}$. These are combined into the map $N : X \times \Theta \to \R^l$ with $X = Y \times W$, $\Theta = \Theta_E \times \Theta_D$ and $l = l_E + l_D$, defined as $N(x,\theta) = N((y,w),(\theta_E,\theta_D)) = (E(y,\theta_E), D(r(w,E(y,\theta_E)),\theta_D))$ for $x = (y,w) \in X$ and $\theta = (\theta_E,\theta_D) \in \Theta$. Denoting $\mu = \upsilon \otimes \omega$ and assuming that $\int \Vert N(x,\theta) \Vert^2 d\mu(x)$ exists and is finite for all $ \theta \in \Theta$, we define the induced map $N_\mu : \Theta \to L^2(\mu,\R^l)$.

Let $\ell : Y \times \R^{l_D} \to \R$ be the function mapping a pair $(y,z_D)$ consisting of an input and a decoder output to minus the natural logarithm of the probability density function of the probability distribution parameterized by $z_D$, evaluated at $y$. Let $\beta > 0$ be a constant and $d_\zeta : \R^{l_E} \to \R$ the function mapping an encoder output $z_E$ to the Kullback-Leibler divergence $d_\zeta(z_E) = D_{KL}(r(\cdot,z_E)_\# \omega \Vert \zeta)$ of the posterior probability distribution $r(\cdot,z_E)_\# \omega$ parameterized by $z_E$ from the prior distribution $\zeta$. The integrand $\iota : X \times \R^l \to \R$ is then defined as $\iota((y,w),(z_E,z_D)) = \ell(y,z_D) + \beta d_\zeta(z_E)$, consisting of the reconstruction term and the divergence term, the latter being weighted by $\beta$ \citep{Higginsetal2017}. Training a VAE is exactly the minimization problem $\min_{(\theta_E,\theta_D) \in \Theta}\{ (\Ell_\mu \circ N_\mu)(\theta_E,\theta_D) \}$. One can analyze the behavior of this problem when optimized by gradient descent by expressing the Jacobian $\partial N(\theta_E,\theta_D) \in \Ell(\Theta, L^2(\mu,\R^l))$ in terms of the Jacobians $\partial E(\theta_E) \in \Ell(\Theta_E, L^2(\upsilon,\R^{l_E}))$ and $\partial D(\theta_D \in \Ell(\Theta_D, L^2(\zeta,\R^{l_D})))$, and by determining if the terms $\ell$, $\beta$ and $d_\zeta$ are such that the integrand $\iota$ satisfies the required conditions.
\end{example}

\begin{example}[Gradient regularized discriminator for generative adversarial networks]
Training generative adversarial networks is done via gradient descent-ascent, the analysis of which is beyond the scope of our paper. Nevertheless, we are going to present an example showing that the training of even gradient regularized discriminators can be translated to the prototype problem. Let $X=\R^k$ be the input space, $D : X \times \Theta \to \R$ be the discriminator mapping differentiable in both arguments, and $\rho, \gamma \in \Prob(X)$ be the real and generated data distributions. Let $\mu = \frac{1}{2}(\rho+\gamma) \in \Prob(X)$ be the mixture of the real and generated distributions. Define $N : X \times \Theta \to \R^l$ with $l=1+k$ as $N(x,\theta)=(D(x,\theta),\nabla_x D(x,\theta))$, so that the output space consists of the discriminator output and its gradient with respect to the input.

A general integrand is defined as $
\iota(x,z) = \frac{d\rho}{d\mu}(x) \iota_\rho(x,z) + \frac{d\gamma}{d\mu}(x)\iota_\gamma(x,z)$, with different integrands $\iota_\rho, \iota_\gamma : X \times \R^l \to \R$ corresponding to the real and generated distributions. An example is the Wasserstein GAN with gradient penalty \citep{Gulrajanietal2017}, corresponding to $\iota_\rho(x,(y,w)) = y - \beta (\Vert w \Vert - 1)^2$ and $\iota_\gamma(x,(y,w)) = -y - \beta (\Vert w \Vert - 1)^2$ with $z = (y,w)$ and some $\beta > 0$. Another is the original GAN \citep{Goodfellowetal2014} with $R_1$ regularization \citep{Meschederetal2018}, corresponding to $\iota_\rho(x,(y,w)) = \log(y) - \beta \Vert w \Vert^2$ and $\iota_\gamma(x,(y,w)) = \log(1-y)$ with some $\beta > 0$, which is used by \citet{Karrasetal2019}.
\end{example}

\section{Multilayer Perceptron}\label{app:mlp}

\subsection{Preliminaries}

Let $t:\R\to \R$ and $r:\R\to \R$ be functions that may also depend on other variables $a,b,\ldots$. We say that $t=O_{a,b,\ldots}(r)$ if there exists a constant $C=C(a,b,\ldots)>0$ such that $|t(y)|\le C|r(y)|$ for all $y\ge 0$. If we say $t=O_{a,b,\ldots}(r)$ for $y$ large enough we mean that there exists a constant $C'=C'(a,b,\ldots)\ge 0$ such that for $y\ge C'$ we have $|t(y)|\le C|r(y)|$. Note that typically these notions are the same, as if $t=O_{a,b,\ldots}(r)$ for $y$ large enough and $|t(y)/r(y)|$ is continuous for $y\ge 0$ then $|t(y)/r(y)|\le C''$ for $y\in[0,C']$ so $t=O_{a,b,\ldots}(r)$ where the implicit constant is $\max(C,C'')$. We say that $t=\Omega_{a,b,\ldots}(r)$ if $r = O_{a,b,\ldots}(t)$.

For any vector space $\R^n$ we always consider the usual Euclidean norm. That is, if $v=(v_1,\ldots,v_n)\in \R^n$ then
\[
\|v\|:=\sqrt{v_1^2+\cdots+v_n^2}.
\]
Given a function $g:\R\to\R$ we will abuse the notation and write $g(v)$ for the coordinate-wise application of $g$, i.e., $g(v):=(g(v_1),\ldots,g(v_n))$.

In the space of real matrices $\R^{n\times m}$ of size $n\times m$ there are several norms that we can consider. The operator norm of $A\in \R^{n\times m}$ is the one that we can consider the default one, and is given by the formula:
\[
\|A\|:= \sup_{\|v\|\le 1} \|Av\|.
\]
This norm enjoys some nice properties that we will use in the sequel. For any $A\in \R^{n\times m}$ and $B\in \R^{m\times s}$ we have that:
\begin{enumerate}
    \item\label{it:1} $\|A\|=\sqrt{\lambda_{\max}(AA^*)}=\sqrt{\lambda_{\max}(A^*A)}=\sigma_{\max}(A)$ where $\lambda_{\max}$ represents the largest eigenvalue of a square matrix, $A^*$ the adjoint of $A$ and $\sigma_{\max}$ the largest singular value of a matrix.
    \item \label{it:2} $\|AB\|\le \|A\|\|B\|$.
\end{enumerate}

On the other hand, this norm has the disadvantage that it does not come from an inner product defined in the space of matrices. Hence, we will sometimes need to consider also the Frobenius norm of a matrix $A\in \R^{n\times m}$ defined as:
\[
\|A\|_F:= \sqrt{\sum_{i=1}^n\sum_{j=1}^m |A_{i,j}|^2}.
\]
The important relation between these norms that we will use is the following fact:
\begin{enumerate}\setcounter{enumi}{2}    \item\label{it:3} For any $A\in \R^{n\times m}$, $\|A\|\le \|A\|_F$.
    \item\label{it:4} A vector $v\in \R^n$ can be seen as a matrix operator either in $\R^{n\times 1}$ or in $\R^{1\times n}ˇ$. In both cases, the operator norm of $v$ equals its Frobenius norm which moreover equals its usual Euclidean norm as a vector. Hence, we can talk about $\|v\|$ and assume that it refers to any of those definitions.

\end{enumerate}

In the sequel, we will be interested in giving a norm to a space that is a product combination of matrix spaces and regular Euclidean spaces. For instance, the parameter space as described in the introduction will be $\Theta=\prod_{i=1}^J \R^{m_{i-1} \times m_i}\times \R^{m_i}$ for some positive integers $m_i\ge 1$. Hence a parameter will be $\theta=(A_1,b_1,\ldots,A_J,b_J)$ where $A_i$ are matrices and $b_i$ are vectors. In this case we define
\[
\|\theta\|:=\sqrt{\|A_1\|_F^2+\|b_1\|^2+\cdots+\|A_J\|_F^2+\|b_J\|^2}.
\]

Normally distributed random variables will play a key role in our analysis. We denote a normally distributed random variable of mean $\mu$ and variance $\sigma^2$ by $\mathcal{N}(\mu,\sigma^2)$. If a random variable $X$ has distribution $\mathcal{N}(\mu,\sigma^2)$ recall that $aX+b$ has distribution $\mathcal{N}(a\mu+b,a^2\sigma^2)$ for any $a,b\in \R$.

A random vector or matrix is just one such that its entries are randomly initialized according to some distribution. In our case, we will mainly consider random vectors and matrices such that their entries are initialized with independent standard normal (i.e., $\mathcal{N}(0,1)$) random variables. By \citet[Theorem~3.1.1]{Vershynin2018} we have the following result:

\begin{lemma}\label{lem:ver-1}
There exists an absolute constant $C_1>0$ such that the following holds. Let $v\in \R^m$ be a random vector such that $b_i$ are independent standard normal distributed for all $1\le i\le m$. Then for any $t>0$ we have that
\[
\vert \Vert v \Vert - \sqrt{m} \vert
\leq C_1 t
\]
with probability at least $1 - 2e^{-t^2}$.
\end{lemma}

And for random matrices we have by \citet[Theorem~4.4.5]{Vershynin2018}:

\begin{lemma}\label{lem:ver-2}
There exists an absolute constant $C_2>0$ such that the following holds. Let $A\in \R^{n\times m}$ be a random matrix where all its entries are chosen independently with a standard normal distribution. Then for any $t>0$,
\[
\sigma_{\max}(A)
= \Vert A \Vert
\leq C_2(\sqrt{n}+\sqrt{m}+t)
\]
with probability at least $1 - 2e^{-t^2}$.
\end{lemma}

\subsection{Jacobian of the Neural Network Mapping, BJ and LG Properties}

Recall from Subsection~\ref{mlp} the definition of our MLP. It will be useful to introduce the following notation:

\begin{definition}[Right multiplier operator]\label{def:right-mult}
Let $n,m\ge 1$ be integers. For any $x\in \R^n$ and any $A\in \R^{m\times n}$ we define $M_x \in \Ell(\R^{m\times n}, \R^m)$ as the operator such that $M_x(A):=Ax$.
\end{definition}

Note that $\|M_x\|=\|x\|$ (its operator norm) for any $x$ where in $\R^{m\times n}$ we are choosing either the operator norm of the matrix or the Frobenius norm (i.e., both $\sup_{\Vert A \Vert \leq 1} \Vert Ax \Vert = \Vert x \Vert$ and $\sup_{\Vert A \Vert_F \leq 1} \Vert Ax \Vert = \Vert x \Vert$ hold).

\begin{definition}[Diagonal operator]\label{def:diag-op}
Let $n\ge 1$. For any $x\in \R^n$ we define $D_x\in \R^{n\times n}$ as the square matrix that has 0 everywhere except for the diagonal, where it equals $x$.
\end{definition}

In this case, it is easy to see that $\|D_x\|=\|x\|_\infty=\max_{1\le i\le n}\{|x_i|\}$ and $\|D_x\|_F=\|x\|$.

We will describe the Jacobian of the neural network mapping inductively as follows. First note that if we only had 1 layer the neural network mapping will be $N(x,\theta=(A,b))=Ax+b$. In this case, as this map is linear in $\theta$, its Jacobian is itself, meaning that if $\eta=(A',b')$ then $\partial_\theta N(x,\theta)\eta = A'x+b'$. For convenience, we will say that
\[
\partial_\theta N(x,\theta) = \left[ \begin{array}{cc} M_x & \Id \end{array} \right].
\]
This way, computing $\partial_\theta N(x,\theta)\eta$ can be regarded as $\left[ \begin{array}{cc} M_x & \Id \end{array} \right] \left[ \begin{array}{c} A' \\ b' \end{array} \right]=M_xA'+b'=A'x+b'$.

The usefulness of this notation comes when we add more layers. Indeed, if we have a 2-layer MLP ($J=2$) we can do the following. In this case, if $\theta=(A_1,b_2,A_2,b_2)$ we will say that $\theta_1=(A_1,b_1)$ and $\theta_2=(A_2,b_2)$. Then
\[
\partial_\theta N_2(x,\theta) = \left[ \begin{array}{ccc} A_2 D_{\phi'\left(\frac{1}{\sqrt{m}} N_{1}(x,\theta_{1})\right)} \frac{1}{\sqrt{m}} \partial_{\theta_{1}} N_{1}(x,\theta_{1 }) & M_{\phi\left(\frac{1}{\sqrt{m}} N_{1}(x,\theta_{1})\right)} & \Id \end{array} \right].
\]
Using the fact that $\partial_{\theta_1} N_1(x,\theta_1)= \left[ \begin{array}{cc} M_x & \Id \end{array} \right]$ we can conclude that
\[
\partial_\theta N_2(x,\theta) = \left[ \begin{array}{cccc} A_2 D_{\phi'\left(\frac{1}{\sqrt{m}} N_{1}(x,\theta_{1})\right)} \frac{1}{\sqrt{m}} M_x & A_2 D_{\phi'\left(\frac{1}{\sqrt{m}} N_{1}(x,\theta_{1})\right)} \frac{1}{\sqrt{m}} \Id & M_{\phi\left(\frac{1}{\sqrt{m}} N_{1}(x,\theta_{1})\right)} & \Id \end{array} \right].
\]
As before, if we want to evaluate this expression on a certain $\eta=(A_1',b_1',A_2',b_2')$ we would just have to multiply the previous block matrix by 
\[
\left[
\begin{array}{c} A_1' \\ b_1' \\ A_2' \\ b_2' \end{array} \right].
\]

Using this convention, the next proposition gives us a useful way of writing the Jacobian of the neural network mapping.

\begin{proposition}\label{prop:exp-jacobian}
Let $N(x,\theta)$ be a $J$-layer MLP as defined in Subsection~\ref{mlp}. Let $\theta=(\theta_1,\ldots,\theta_J)$ where $\theta_i=(A_i,b_i)\in \R^{m_{i-1} \times m_i}\times \R^{m_i}$ for $1\le i\le J$. Then
\[
\partial_{\theta_1} N_1(x,\theta_1) = \left[ \begin{array}{cc} M_x & \Id \end{array} \right]
\]
and for $2\le i\le J$,
\[
\partial_{\theta_{1:i}} N_i(x,\theta_{1:i}) = \left[ \begin{array}{ccc} A_i D_{\phi'\left(\frac{1}{\sqrt{m}} N_{i-1}(x,\theta_{1 : i-1})\right)} \frac{1}{\sqrt{m}} \partial_{\theta_{1 : i-1}} N_{i-1}(x,\theta_{1 : i-1}) & M_{\phi\left(\frac{1}{\sqrt{m}} N_{i-1}(x,\theta_{1 : i-1})\right)} & \Id \end{array} \right].
\]
\end{proposition}

\begin{proof}
It follows by applying the chain rule repeatedly on the expression of $N(x,\theta)$.
\end{proof}

First we will need a result about the concentration of the norms of the initial matrices and biases.

\begin{lemma}[Concentration of initial parameters]\label{lemma_initial_params}
Fix any constant $C>0$ and choose $\theta_0 = (A_{0,1},b_{0,1},\ldots,A_{0,J},b_{0,J})$ randomly and independently with distribution $\mathcal{N}(0,1)$ in each entry, except for those of $A_{0,J}$ with distribution $\mathcal{N}(0,\frac{1}{m})$. Then, with probability at least $1-4Je^{-\Omega_{\gamma_{1:J-1},k,l}(m)}$ we have that
\[
\|A_{0,1}\| =O_{k,\gamma_1}(\sqrt{m}),
\]
\[
\|A_{0,i}\| =O_{\gamma_{i-1},\gamma_i}(\sqrt{m})
\]
for all $2\le i \le J-1$,
\[
\|A_{0,J}\| =O_{\gamma_{J-1},l}(1),
\]
\[
\|b_{0,i}\| =O_{\gamma_i}(\sqrt{m})
\]
for all $1 \le i \le J-1$ and
\[
\|b_{0,J}\| =O_{l}(1).
\]
\end{lemma}
\begin{proof}
The result is an immediate consequence of Lemma~\ref{lem:ver-1} and Lemma~\ref{lem:ver-2}.
\end{proof}

First we have to prove the following:

\begin{lemma}[Neural network mapping is bounded]\label{lem:nn-map-bounded}
Let $N(x,\theta)$ be a $J$-layer MLP as defined in Subsection~\ref{mlp}. Fix any constant $C>0$ and choose $\theta_0 = (A_{0,1},b_{0,1},\ldots,A_{0,J},b_{0,J})$ randomly and independently with distribution $\mathcal{N}(0,1)$ in each entry, except for those of $A_{0,J}$ with distribution $\mathcal{N}(0,\frac{1}{m})$. Then, with probability at least $1-4Je^{-\Omega_{\gamma_{1:J-1},k,l}(m)}$ we have that for any  $\theta$ such that $\|\theta-\theta_0\| \le C\sqrt{m}$,
\[
\Vert N_i(x,\theta_{1:i}) \Vert \leq O_{C,\gamma_{1:i},k}(\sqrt{m}\sqrt{\Vert x \Vert^2+1})
\]
for $1 \leq i \leq J-1$ and
\[
\Vert N(x,\theta) \Vert \leq O_{C,\gamma_{1:J-1},k,l}(\sqrt{m}\sqrt{\Vert x \Vert^2+1}).
\]
\end{lemma}
\begin{proof}
Condition on the event of Lemma~\ref{lemma_initial_params} that happens with probability at least $1-4Je^{-\Omega_{\gamma_{1:J-1},k,l}(m)}$. By the triangle inequality and using property~\ref{it:3} we also have that if $\theta=(A_1,b_1,\ldots,A_J,b_J)$ then for all $1\le i\le J$,
\[
\|A_i\| =O_{C,\gamma_{1:i},k,l}(\sqrt{m}) \text{ and } \|b_i\| =O_{C,\gamma_{1:i},k,l}(\sqrt{m}).
\]

The proof of the result will then be by induction on $i$. The case $i=1$ is easy as
\[
\Vert N_1(x,\theta) \Vert = \Vert A_1 x + b_1 \Vert \leq \sqrt{\Vert A_1 \Vert^2 + \Vert b_1 \Vert^2} \sqrt{\Vert x \Vert^2 + 1}=O_{C,\gamma_{1:i},k,l}(\sqrt{m}\sqrt{\Vert x \Vert^2 + 1}).
\]
For $i>1$, we have that
\begin{align*}
\Vert N_i(x,\theta_{1:i}) \Vert & =  \left\Vert A_i \phi\left(\frac{1}{\sqrt{m}} N_{i-1}(x,\theta_{1:i-1})\right) + b_i \right\Vert \\
& \le  \left\Vert A_i \right\Vert \left\Vert\phi\left(\frac{1}{\sqrt{m}} N_{i-1}(x,\theta_{1:i-1})\right)\right\Vert + \left\Vert b_i \right\Vert.
\end{align*}
As $\phi(x)\leq |x|$ we have that
\begin{align*}
\Vert N_i(x,\theta_{1:i}) \Vert &  \le  \left\Vert A_i \right\Vert \left\Vert\frac{1}{\sqrt{m}} N_{i-1}(x,\theta_{1:i-1})\right\Vert + \left\Vert b_i \right\Vert\\
& = \frac{1}{\sqrt{m}}\left\Vert A_i \right\Vert \left\Vert N_{i-1}(x,\theta_{1:i-1})\right\Vert + \left\Vert b_i \right\Vert\\
& = O_{C,\gamma_{1:i},k,l}(\sqrt{m}\sqrt{\Vert x \Vert^2 + 1}).
\end{align*}\end{proof}

Our next goal is to prove that if we initialize randomly the weights of the neural network mapping, with probability tending to 1 as $m\to \infty$ we have that we will choose a \emph{good starting point}. That means that in a \emph{large} ball around the initial random parameter the BJ and LJ properties will be satisfied. More precisely:

\begin{theorem}[Neural network mapping is BJ]\label{thm:nn-map-bj}
Let $N(x,\theta)$ be a $J$-layer MLP as defined in Subsection~\ref{mlp}. Fix any constant $C>0$ and choose $\theta_0 = (A_{0,1},b_{0,1},\ldots,A_{0,J},b_{0,J})$ randomly and independently with distribution $\mathcal{N}(0,1)$ in each entry, except for those of $A_{0,J}$ with distribution $\mathcal{N}(0,\frac{1}{m})$. Then, with probability at least $1-4Je^{-\Omega_{\gamma_{1:J-1},k,l}(m)}$ we have that for any  $\theta$ such that $\|\theta-\theta_0\| \le C\sqrt{m}$,
\[
\Vert \partial_{\theta_{1:i}} N(x,\theta_{1:i}) \Vert \leq O_{C,\gamma_{1:i},k,\Vert \phi' \Vert_\infty}(\sqrt{\Vert x \Vert^2+1})
\]
for $1 \leq i \leq J-1$ and
\[
\Vert \partial_\theta N(x,\theta) \Vert \leq O_{C,\gamma_{1:J-1},k,l,\Vert \phi' \Vert_\infty}(\sqrt{\Vert x \Vert^2+1}).
\]
\end{theorem}
\begin{proof}
Condition on the event of Lemma~\ref{lemma_initial_params} that happens with probability at least $1-4Je^{-\Omega_{\gamma_{1:J-1},k,l}(m)}$. By the triangle inequality and using property~\ref{it:3} we also have that if $\theta=(A_1,b_1,\ldots,A_J,b_J)$ then for all $1\le i\le J$,
\[
\|A_i\| =O_{C,\gamma_{1:i},k,l}(\sqrt{m}) \text{ and } \|b_i\| =O_{C,\gamma_{1:i},k,l}(\sqrt{m}).
\]
 
We will prove this again by induction on the number of hidden layers. For the base case $i=1$ we have that by Proposition~\ref{prop:exp-jacobian} and the line after Definition~\ref{def:right-mult},
\[
\|\partial_{\theta_1} N_1(x,\theta_1)\| \le \sqrt{\|M_x\|^2+1} = O(\sqrt{\|x\|^2+1}).
\]
For the inductive case, again using Proposition~\ref{prop:exp-jacobian} we have that
\[
\Vert \partial_\theta N_i(x,\theta) \Vert \leq \sqrt{ \frac{1}{m} \Vert A_i \Vert^2 \Vert \phi' \Vert_\infty^2 \Vert \partial_{\theta_{1 : i-1}} N_{i-1}(x,\theta_{1 : i-1}) \Vert^2 + \frac{1}{m} \Vert N_{i-1}(x,\theta_{1 : i-1}) \Vert^2 + 1 }.
\]
By Lemma~\ref{lem:nn-map-bounded} and the induction hypothesis we conclude the result.
\end{proof}

We will need an extra auxiliary result in this section.

\begin{lemma}\label{lem:nn-map-lipschitz}
Let $N(x,\theta)$ be a $J$-layer MLP as defined in Subsection~\ref{mlp}. Fix any constant $C>0$ and choose $\theta_0 = (A_{0,1},b_{0,1},\ldots,A_{0,J},b_{0,J})$ randomly and independently with distribution $\mathcal{N}(0,1)$ in each entry, except for those of $A_{0,J}$ with distribution $\mathcal{N}(0,\frac{1}{m})$. Then, with probability at least $1-4Je^{-\Omega_{\gamma_{1:J-1},k,l}(m)}$ we have that for any  $\theta,\theta'$ such that $\max(\|\theta-\theta_0\|,\|\theta'-\theta_0\|) \le C\sqrt{m}$,
\[
\Vert N_i(x,\theta_{1:i})-N_i(x,\theta_{1:i}') \Vert \leq O_{C,\gamma_{1:i},k,\Vert \phi' \Vert_\infty}\left(\Vert \theta_{1:i} - \theta_{1:i}'\Vert\sqrt{\Vert x \Vert^2+1}\right)
\]
for $1 \leq i \leq J-1$ and
\[
\Vert N(x,\theta)-N(x,\theta') \Vert \leq O_{C,\gamma_{1:J-1}, k,l,\Vert \phi' \Vert_\infty}\left(\Vert \theta - \theta'\Vert\sqrt{\Vert x \Vert^2+1}\right).
\]
\end{lemma}
\begin{proof}
Condition on the event of Lemma~\ref{lemma_initial_params} that happens with probability at least $1-4Je^{-\Omega_{\gamma_{1:J-1},k,l}(m)}$. By the triangle inequality and using property~\ref{it:3} we also have that if $\theta=(A_1,b_1,\ldots,A_J,b_J)$ (resp. $\theta'$) then for all $1\le i\le J$,
\[
\|A_i\| =O_{C,\gamma_{1:i},k,l}(\sqrt{m}) \text{ and } \|b_i\| =O_{C,\gamma_{1:i},k,l}(\sqrt{m}) \text{ (resp. }A_i',b_i'\text{)}.
\]

We prove this again by induction on $i$. For $i=1$ we have that
\[
\Vert N_1(x,\theta_1)-N_1(x,\theta_1') \Vert = \Vert A_1 x + b_1 - A_1' x - b_1' \Vert \leq \Vert \theta_1 - \theta_1' \Vert \sqrt{\Vert x \Vert^2 + 1} = O(\Vert \theta_1 - \theta_1' \Vert\sqrt{\Vert x \Vert^2+1}).
\]
For larger $i$, we have that
\[
\Vert N_i(x,\theta_{1:i})-N_i(x,\theta_{1:i}') \Vert = \left\Vert A_i \phi\left(\frac{1}{\sqrt{m}} N_{i-1}(x,\theta_{1:i-1})\right) + b_i - A_i' \phi\left(\frac{1}{\sqrt{m}} N_{i-1}(x,\theta_{1:i-1}')\right) - b_i' \right\Vert
\]
\[
= \left\Vert (A_i-A_i') \phi\left(\frac{1}{\sqrt{m}} N_{i-1}(x,\theta_{1:i-1})\right) + A_i' \left(\phi\left(\frac{1}{\sqrt{m}} N_{i-1}(x,\theta_{1:i-1})\right) - \phi\left(\frac{1}{\sqrt{m}} N_{i-1}(x,\theta_{1:i-1}')\right)\right) + (b_i - b_i') \right\Vert
\]
\[
\leq \Vert A_i-A_i' \Vert \frac{1}{\sqrt{m}} \Vert N_{i-1}(x,\theta_{1:i-1}) \Vert + \Vert A_i' \Vert \Vert \phi' \Vert_\infty \frac{1}{\sqrt{m}} \Vert N_{i-1}(x,\theta_{1:i-1})-N_{i-1}(x,\theta_{1:i-1}') \Vert + \Vert b_i - b_i' \Vert
\]
\[
= O_{C,\gamma_{1:i},k,\Vert \phi' \Vert_\infty}\left(\Vert \theta_{1:i}-\theta_{1:i}' \Vert\sqrt{\Vert x \Vert^2+1} \right)
\]
where in the first inequality we have used induction for $i \in [1:J-1]$ and Lemma~\ref{lem:nn-map-bounded}. Therefore
\[
\Vert N(x,\theta)-N(x,\theta') \Vert \leq O_{C,\gamma_{1:J-1},k,l,\Vert \phi' \Vert_\infty}\left(\Vert \theta-\theta' \Vert\sqrt{\Vert x \Vert^2+1} \right).
\]
\end{proof}

We can now prove the last result of this subsection. Namely, that with probability tending to 1 as $m\to\infty$ we have that in a large ball around the initial point we have the LJ condition.

\begin{theorem}[Neural network mapping is LJ]\label{thm:nn-map-lj}
Let $N(x,\theta)$ be a $J$-layer MLP as defined in Subsection~\ref{mlp}. Fix any constant $C>0$ and choose $\theta_0 = (A_{0,1},b_{0,1},\ldots,A_{0,J},b_{0,J})$ randomly and independently with distribution $\mathcal{N}(0,1)$ in each entry, except for those of $A_{0,J}$ with distribution $\mathcal{N}(0,\frac{1}{m})$. Then, with probability at least $1-4Je^{-\Omega_{\gamma_{1:J-1},k,l}(m)}$ we have that for any  $\theta,\theta'$ such that $\max(\|\theta-\theta_0\|,\|\theta'-\theta_0\|) \le C\sqrt{m}$,
\[
\Vert \partial_\theta N(x,\theta)-\partial_\theta N(x,\theta') \Vert \leq O_{C,\gamma_{1:J-1}, k,l,\Vert \phi' \Vert_\infty,\Vert \phi' \Vert_L}\left(\frac{1}{\sqrt{m}}\Vert \theta - \theta'\Vert\sqrt{\Vert x \Vert^2+1}^{J-1}\right).
\]
\end{theorem}
\begin{proof}
Condition on the event of Lemma~\ref{lemma_initial_params} that happens with probability at least $1-4Je^{-\Omega_{\gamma_{1:J-1},k,l}(m)}$. By the triangle inequality and using property~\ref{it:3} we also have that if $\theta=(A_1,b_1,\ldots,A_J,b_J)$ (resp. $\theta'=(A_1',b_1',\ldots,A_J',b_J')$) then for all $1\le i\le J$,
\[
\|A_i\| =O_{C,\gamma_{1:i},k,l}(\sqrt{m}) \text{ and } \|b_i\| =O_{C,\gamma_{1:i},k,l}(\sqrt{m}) \text{ (resp. }A_i',b_i'\text{)}.
\]

The case $i=1$ of this result is trivial as in this case $\partial_{\theta} N_1(x,\theta)=\partial_{\theta'} N_1(x,\theta')$ by Proposition~\ref{prop:exp-jacobian}.

For the inductive case, again using Proposition~\ref{prop:exp-jacobian} we have that $\partial_{\theta_{1:i}} N_i(x,\theta_{1:i})-\partial_{\theta_{1:i}} N_i(x,\theta_{1:i}') $ equals (the following matrix is a row matrix, hence the adjoint that appears on the top left corner)
\[
\left[ \begin{array}{ccc} A_i D_{\phi'\left(\frac{1}{\sqrt{m}} N_{i-1}(x,\theta_{1 : i-1})\right)} \frac{1}{\sqrt{m}} \partial_{\theta_{1 : i-1}} N_{i-1}(x,\theta_{1 : i-1}) - A_i' D_{\phi'\left(\frac{1}{\sqrt{m}} N_{i-1}(x,\theta_{1 : i-1}')\right)} \frac{1}{\sqrt{m}} \partial_{\theta_{1 : i-1}'} N_{i-1}(x,\theta_{1 : i-1}') & \\ M_{\phi\left(\frac{1}{\sqrt{m}} N_{i-1}(x,\theta_{1 : i-1})\right)} - M_{\phi\left(\frac{1}{\sqrt{m}} N_{i-1}(x,\theta_{1 : i-1}')\right)} & \\ \Id - \Id \end{array} \right]^*.
\]
The idea now is to bound each of those terms one by one. Clearly the last one is just 0 so we can ignore it. We write the first one as
\[ \begin{array}{c} (A_i - A_i') D_{\phi'\left(\frac{1}{\sqrt{m}} N_{i-1}(x,\theta_{1 : i-1})\right)} \frac{1}{\sqrt{m}} \partial_{\theta_{1 : i-1}} N_{i-1}(x,\theta_{1 : i-1}) \\ + A_i' D_{\phi'\left(\frac{1}{\sqrt{m}} N_{i-1}(x,\theta_{1 : i-1})\right) - \phi'\left(\frac{1}{\sqrt{m}} N_{i-1}(x,\theta_{1 : i-1}')\right)} \frac{1}{\sqrt{m}} \partial_{\theta_{1 : i-1}} N_{i-1}(x,\theta_{1 : i-1}) \\+ A_i' D_{\phi'\left(\frac{1}{\sqrt{m}} N_{i-1}(x,\theta_{1 : i-1}')\right)} \frac{1}{\sqrt{m}} (\partial_{\theta_{1 : i-1}} N_{i-1}(x,\theta_{1 : i-1}) - \partial_{\theta_{1 : i-1}'} N_{i-1}(x,\theta_{1 : i-1}'))  \end{array}.
\]
The operator norm of this expression can then be bounded using $\Vert \phi' \Vert_\infty$ and $\Vert \phi' \Vert_L$. Hence the operator norm of the previous expression is at most
\[
 \frac{1}{\sqrt{m}} \Vert A_i - A_i' \Vert \Vert \phi' \Vert_\infty \Vert \partial_{\theta_{1 : i-1}} N_{i-1}(x,\theta_{1 : i-1}) \Vert 
\]
\[
+ \frac{1}{m} \Vert A_i' \Vert \Vert \phi' \Vert_L \Vert N_{i-1}(x,\theta_{1 : i-1}) - N_{i-1}(x,\theta_{1 : i-1}') \Vert \Vert \partial_{\theta_{1 : i-1}} N_{i-1}(x,\theta_{1 : i-1}) \Vert 
\]
\[
 + \frac{1}{\sqrt{m}} \Vert A_i' \Vert \Vert \phi' \Vert_\infty \Vert \partial_{\theta_{1 : i-1}} N_{i-1}(x,\theta_{1 : i-1}) - \partial_{\theta_{1 : i-1}'} N_{i-1}(x,\theta_{1 : i-1}') \Vert.
\]
Using Theorem~\ref{thm:nn-map-bj}, Lemma~\ref{lem:nn-map-bounded} and Lemma~\ref{lem:nn-map-lipschitz} the result then follows.

For the second term $M_{\phi\left(\frac{1}{\sqrt{m}} N_{i-1}(x,\theta_{1 : i-1})\right)} - M_{\phi\left(\frac{1}{\sqrt{m}} N_{i-1}(x,\theta_{1 : i-1}')\right)} = M_{\phi\left(\frac{1}{\sqrt{m}} N_{i-1}(x,\theta_{1 : i-1})\right) - \phi\left(\frac{1}{\sqrt{m}} N_{i-1}(x,\theta_{1 : i-1}')\right)}$, since $\Vert \phi' \Vert_\infty = \Vert \phi \Vert_L$, we can bound its operator norm by 
\[
\frac{1}{\sqrt{m}} \Vert \phi' \Vert_\infty \Vert N_{i-1}(x,\theta_{1 : i-1}) - N_{i-1}(x,\theta_{1 : i-1}') \Vert.
\]
By Lemma~\ref{lem:nn-map-lipschitz} and induction the result follows for this term as well. Putting this estimate together with we have that at level $i$ we have the estimate
\[
\|\partial_{\theta_{1:i}} N_i(x,\theta_{1:i})-\partial_{\theta_{1:i}} N_i(x,\theta_{1:i}') \|=O_{C,\gamma_{1:i}, k, \Vert \phi' \Vert_\infty, \Vert \phi' \Vert_L}\left(\frac{1}{\sqrt{m}}\Vert \theta_{1:i} - \theta_{1:i}'\Vert\sqrt{\Vert x \Vert^2+1}^{i-1}\right)
\]
In particular the result of the theorem follows:
\[
\Vert \partial_\theta N(x,\theta)-\partial_\theta N(x,\theta') \Vert \leq O_{C,\gamma_{1:J-1}, k,l,\Vert \phi' \Vert_\infty,\Vert \phi' \Vert_L}\left(\frac{1}{\sqrt{m}}\Vert \theta - \theta'\Vert\sqrt{\Vert x \Vert^2+1}^{J-1}\right).
\]
\end{proof}

Finally, we need to bound the norm of the initial output of the neural network.

\begin{lemma}[Neural network mapping is bounded]\label{lem:nn-map-bounded-initial}
Let $N(x,\theta)$ be a $J$-layer MLP as defined in Subsection~\ref{mlp}. Fix any constant $C>0$ and choose $\theta_0 = (A_{0,1},b_{0,1},\ldots,A_{0,J},b_{0,J})$ randomly and independently with distribution $\mathcal{N}(0,1)$ in each entry, except for those of $A_{0,J}$ with distribution $\mathcal{N}(0,\frac{1}{m})$. Then, with probability at least $1-4Je^{-\Omega_{\gamma_{1:J-1},k,l}(m)}$ we have that
\[
\Vert N(x,\theta_0) \Vert \leq O_{\gamma_{1:J-1},k,l}(\sqrt{\Vert x \Vert^2+1}).
\]
\end{lemma}
\begin{proof}
Condition on the event of Lemma~\ref{lemma_initial_params} that happens with probability at least $1-4Je^{-\Omega_{\gamma_{1:J-1},k,l}(m)}$.

We claim that
\[
\Vert N_i(x,\theta_{0,1:i}) \Vert \leq O_{\gamma_{1:i},k}(\sqrt{m}\sqrt{\Vert x \Vert^2+1})
\]
for $1 \leq i \leq J-1$. The proof of this claim will be by induction on $i$. The case $i=1$ is easy as
\[
\Vert N_1(x,\theta_{0,1}) \Vert = \Vert A_{0,1} x + b_{0,1} \Vert \leq \sqrt{\Vert A_{0,1} \Vert^2 + \Vert b_{0,1} \Vert^2} \sqrt{\Vert x \Vert^2 + 1}=O_{\gamma_1,k}(\sqrt{m}\sqrt{\Vert x \Vert^2 + 1}).
\]
For $i>1$, we have that
\begin{align*}
\Vert N_i(x,\theta_{0,1:i}) \Vert & =  \left\Vert A_{0,i} \phi\left(\frac{1}{\sqrt{m}} N_{i-1}(x,\theta_{0,1:i-1})\right) + b_{0,i} \right\Vert \\
& \le  \left\Vert A_{0,i} \right\Vert \left\Vert\phi\left(\frac{1}{\sqrt{m}} N_{i-1}(x,\theta_{0,1:i-1})\right)\right\Vert + \left\Vert b_{0,i} \right\Vert.
\end{align*}
As $\phi(x)\leq |x|$ we have that
\begin{align*}
\Vert N_i(x,\theta_{0,1:i}) \Vert &  \le  \left\Vert A_{0,i} \right\Vert \left\Vert\frac{1}{\sqrt{m}} N_{i-1}(x,\theta_{0,1:i-1})\right\Vert + \left\Vert b_{0,i} \right\Vert\\
& = \frac{1}{\sqrt{m}}\left\Vert A_{0,i} \right\Vert \left\Vert N_{i-1}(x,\theta_{0,1:i-1})\right\Vert + \left\Vert b_{0,i} \right\Vert\\
& = O_{\gamma_{1:i},k}(\sqrt{m}\sqrt{\Vert x \Vert^2 + 1}),
\end{align*}
proving the claim.

The result then follows since
\begin{align*}
\Vert N(x,\theta_0) \Vert &  \le  \left\Vert A_{0,J} \right\Vert \left\Vert\frac{1}{\sqrt{m}} N_{J-1}(x,\theta_{0,1:J-1})\right\Vert + \left\Vert b_{0,J} \right\Vert\\
& = \frac{1}{\sqrt{m}}\left\Vert A_{0,J} \right\Vert \left\Vert N_{J-1}(x,\theta_{0,1:J-1})\right\Vert + \left\Vert b_{0,J} \right\Vert\\
& = O_{\gamma_{1:J-1},k,l}(\sqrt{\Vert x \Vert^2 + 1}).
\end{align*}
\end{proof}

\subsection{BJ and LJ Bounds for General $\mu$}
Assume that the moments of $\mu$ up to order $2(J-1)$ are finite with
\[
M_{\mu,i} = \int \Vert \cdot \Vert^i d\mu
\]
for $i \in [0,2(J-1)]$ (with $M_{\mu,0}=1$). 

\begin{theorem}\label{theorem_mlp}
Fix any $C>0$. Let $\theta_0$ be chosen randomly as described above and define $D = \{\theta\in \Theta : \|\theta-\theta_0\| \le C\sqrt{m} \} \subset \Theta$. 

Then, with probability at least $1 - 4Je^{-\Omega_{\gamma_{1:J-1},k,l}(m)}$, the induced mapping $N_\mu$ is $K_N$-BJ and $L_N$-LJ on $D$ with
\[
K_N = O\left(\sqrt{M_{\mu,2}+1}\right),
\]
\[
L_N = O\left(\frac{1}{\sqrt{m}} \sqrt{\sum_{i=0}^{J-1}\binom{J-1}{i}M_{\mu,2(J-1-i)}}\right),
\]
and
\[
\Vert N_\mu(\theta_0) \Vert = O\left( \sqrt{M_{\mu,2}+1} \right).
\]
Where these last three implicit constants depend on $C,\gamma_{1:J-1},k,l,\|\phi'\|_\infty,\Vert \phi' \Vert_L$.
\end{theorem}
\begin{proof}
By Theorem~\ref{thm:nn-map-bj} and Theorem~\ref{thm:nn-map-lj} we have that $N(x,\cdot)$ is $O(\sqrt{\Vert x \Vert^2 + 1})$-BJ and $O(\frac{1}{\sqrt{m}}\sqrt{\Vert x \Vert^2 + 1}^{J-1})$-LJ. These facts combined with Lemma~\ref{lem:inheritance-gj-lj-main} we have the first two claims of the result with probability at least $1 - 4Je^{-\Omega_{\gamma_{1:J-1},k,l}(m)}$. By Lemma~\ref{lem:nn-map-bounded-initial} we have an estimate of $\|N(x,\theta_0)\|$ for every $x$. Integrating over $\mu$ the result follows. As we conditioned on the same event happening with probability at least $1 - 4Je^{-\Omega_{\gamma_{1:J-1},k,l}(m)}$ the result follows.
\end{proof}

\subsection{Sampling the Dataset from a Data Generating Distribution}

Let $\nu\in \mathcal{P}(\R^k)$ satisfy the Lipschitz concentration property (see Section~\ref{mlp}). Suppose that $x\in \R^k$ is a random variable with distribution $\nu$. As the norm function $\|\cdot\|:\R^k\to \R$ is 1-Lipschitz we have that
\[
\nu\left(\left\{
x \in \R^k : \left\vert \Vert x \Vert - \int \Vert \cdot \Vert d\nu \right\vert > t
\right\}\right)
\leq 2e^{-ct^2},
\]
i.e., the norm distributed according to $\nu$ is a sub-Gaussian random variable. 

Let now $x_1,\ldots,x_d$ be $d$ i.i.d. (independent and identically distributed) random variables with distribution $\nu$. The empirical measure will be now $\mu:=\frac{1}{d}\sum_{i=1}^d \delta_{x_i}$. Recall from \citep{Vershynin2018} the notion of sub-Gaussian random variable and from \citep{Vladimirovaetal2020, Kuchibhotlaetal2022} that of sub-Weibull random variable. Note that the latter is parameterized by a constant which is denoted by $p$ on \citep{Vladimirovaetal2020} and by $\alpha$ in \citep{Kuchibhotlaetal2022} and the relation between these is $p\alpha=1$. We are going to use the former parameterization. Sub-Gaussian variables are sub-Weibull with parameter $p=\frac{1}{2}$, while sub-exponential variables are sub-Weibull with parameter $p=1$.

To prove Theorem~\ref{theorem_mlp_empirical}, we need the following lemmas:

\begin{lemma} \label{sub_weibull_x_norm}
Let $\nu\in \mathcal{P}(\R^k)$ be a distribution satisfying the Lipschitz concentration property. Suppose that $x$ is a random variable with distribution $\nu$. Then $\|x\|$ has sub-Gaussian distribution, $\|x\|^{2t}$ has sub-Weibull distribution with parameter $p=t$ and $a_1\|x\|^{2t_1} + \cdots + a_n\|x\|^{2t_n}$ for $a_1,\cdots,a_n \in \R$ has sub-Weibull distribution with parameter $p=\max\{t_1,\cdots,t_n\}$.
\end{lemma}
\begin{proof}
We have already proved the first claim of the lemma. The second and third follow by \citet[Proposition~2.3]{Vladimirovaetal2020}.
\end{proof}

Sub-Weibull random variables concentrate around their means in a similar way as sub-exponential variables do. More precisely we have the following version of Bernstein's inequality (generalizing \citet[Corollary~2.8.3]{Vershynin2018}, which corresponds to the case $p=1$):

\begin{lemma} \label{sub_weibull_bernstein}
Let $x_1,\cdots,x_n$ be i.i.d. mean zero sub-Weibull random variables with parameter $p \geq 1$. Then one has
\[
\left\vert \frac{1}{n} \sum_{i=1}^n x_i \right\vert \geq t
\]
with probability at most
\[
2e^{-c_p \min\left( \frac{n t^2}{K_p^2}, \left(\frac{n t}{K_p}\right)^{\frac{1}{p}} \right)}
\]
for some absolute constant $c_p>0$, any $t>0$ and with $K_p$ being the sub-Weibull norm with parameter $p$ of the $X_i$s.
\end{lemma}
\begin{proof}
By \citet[Theorem~3.1]{Kuchibhotlaetal2022}, one has
\[
\left\vert \frac{1}{n} \sum_{i=1}^n x_i \right\vert \geq 2eCK_p\left( \sqrt{\frac{\hat{t}}{n}} + \frac{4^p}{\sqrt{2}} \frac{\hat{t}^p}{n} \right)
\]
with probability at most $2e^{-\hat{t}}$ with a specific constant $C$ depending on $p$. There exists $T>0$ such that for $\hat{t} \leq T$ one has
\[
\sqrt{\frac{\hat{t}}{n}} \geq \frac{4^p}{\sqrt{2}} \frac{\hat{t}^p}{n}
\]
and for $\hat{t} \geq T$ one has
\[
\sqrt{\frac{\hat{t}}{n}} \leq \frac{4^p}{\sqrt{2}} \frac{\hat{t}^p}{n}.
\]
Let $\hat{t} \leq T$, so that
\[
\left\vert \frac{1}{n} \sum_{i=1}^n x_i \right\vert \geq 4eCK_p\sqrt{\frac{\hat{t}}{n}}
\]
with probability at most $2e^{-\hat{t}}$. Letting $t = 4eCK_p\sqrt{\frac{\hat{t}}{n}}$ leads to
\[
\left\vert \frac{1}{n} \sum_{i=1}^n x_i \right\vert \geq t
\]
with probability at most $
2e^{-\frac{1}{16e^2C^2}\frac{nt^2}{K_p^2}}$. Now let $\hat{t} \geq T$, so that
\[
\left\vert \frac{1}{n} \sum_{i=1}^n x_i \right\vert \geq 4eCK_p\frac{4^p}{\sqrt{2}} \frac{\hat{t}^p}{n}
\]
with probability at most $2e^{-\hat{t}}$. Letting $t = 4eCK_p\frac{4^p}{\sqrt{2}} \frac{\hat{t}^p}{n}$ leads to
\[
\left\vert \frac{1}{n} \sum_{i=1}^n x_i \right\vert \geq t
\]
with probability at most
\[
2e^{-\left(\frac{\sqrt{2}}{4eC4^p}\right)^{\frac{1}{p}} \left(\frac{n t}{K_p}\right)^{\frac{1}{p}}}.
\]
These two cases lead to the conclusion with \[
c_p = \max\left( \frac{1}{16e^2C^2}, \left(\frac{\sqrt{2}}{4eC4^p}\right)^{\frac{1}{p}} \right).
\]
\end{proof}

\begin{corollary}\label{cor:concentration-sub-weibull}
Let $\nu\in \mathcal{P}(\R^k)$ be a distribution satisfying the Lipschitz concentration property. Let $x_1,\ldots,x_d$ be i.i.d. random variables with distribution $\nu$. Let $p\ge 1$ be an integer. Then 
\[
\nu^{\otimes d}\left(\left\{
(x_1,\cdots,x_d) : \left\vert \frac{1}{d} \sum_{i=1}^d (\Vert x_i \Vert^2 + 1)^p - \sum_{j=0}^p \binom{p}{j} \int \Vert \cdot \Vert^{2(p-j)} d\nu \right\vert > t
\right\}\right)
\]
\[
\leq 2e^{-c_p \min\left( \frac{d t^2}{C_p^2}, \left(\frac{d t}{C_p}\right)^{\frac{1}{p}} \right)}
\]
with some absolute constant $c_p>0$ and any $t>0$ and $C_p$ depending on $\nu$ and $p$.
\end{corollary}
\begin{proof}
By Lemma~\ref{sub_weibull_x_norm}, $(\Vert x_i \Vert^2 + 1)^p$ is $p$-sub-Weibull. One also has the expectation
\[
\int (\Vert x_i \Vert^2 + 1)^p d\nu(x_i)
= \int \sum_{j=0}^p \binom{p}{j} \Vert x_i \Vert^{2(p-j)} d\nu(x_i)
= \sum_{j=0}^p \binom{p}{j} \int \Vert \cdot \Vert^{2(p-j)} d\nu.
\]
Now apply Lemma~\ref{sub_weibull_bernstein} to the zero-mean $p$-sub-Weibull variables $(\Vert x_i \Vert^2 + 1)^p - \sum_{j=0}^p \binom{p}{j} \int \Vert \cdot \Vert^{2(p-j)} d\nu$.
\end{proof}

\subsection{General Convergence Result for Empirical Measures Sampled from a Fixed Distribution}

The goal of this subsection is to prove one of the main results of the paper, Theorem~\ref{thm:general-conv-mlp}.

\begin{proof}[Proof of Theorem~\ref{thm:general-conv-mlp}]
Let $f(m)$ be as described in Section~\ref{mlp}, i.e., a function such that $f(m)\to 0$ and $\sqrt{m}f(m)\to \infty$ as $m\to \infty$. Let $D=B(\theta_0, \sqrt{m}f(m))$. By Theorem~\ref{theorem_mlp_empirical} we have that with probability at least 
\[
1 - 4Je^{-\Omega_{\gamma_{1:J-1},k,l}(m)}
-2e^{-c_1 d \min\left( \frac{\epsilon_K^2}{C_{\nu,1}^2}, \frac{\epsilon_K}{C_{\nu,1}} \right)} - 2e^{-c_J \min\left( \frac{d \epsilon_L^2}{C_{\nu,J}}, \left(\frac{d \epsilon_L}{C_{\nu,J}}\right)^{\frac{1}{J-1}} \right)},
\]
we have that $N_\mu$ is $K_N$-BJ and $L_N$-LJ on $D$ with $K_N = O_T(1)$, $L_N = O_T\left(\frac{1}{\sqrt{m}}\right)$ and $\Vert N_\mu(\theta_0) \Vert = O_T(1)$.

By the third fact and Lemma~\ref{lemma_initial_loss_value}, we have that
\[
\Ell_\mu(N_\mu(\theta_0)) \leq (L_\Ell \Vert N_\mu(\theta_0) \Vert + \Vert \nabla \Ell_\mu(0) \Vert) \Vert N_\mu(\theta_0) \Vert + \Ell_\mu(0).
\]
By \citet[Theorem~3A]{Rockafellar1976},
\[
{\Ell_\mu}_* = \int {\iota(x,\cdot)}_* d\mu(x) = \iota_*.
\]
Since $\iota(x,\cdot)$ is $\lambda_\Ell$-PL, one has
\[
\Ell_\mu(0) = \int \iota(x,0) d\mu(x) \leq \int \frac{1}{2\lambda_\Ell} \Vert \nabla_z \iota(x,0) \Vert^2 + \iota(x,\cdot)_* d\mu = \frac{1}{2\lambda_\Ell} \Vert \nabla \Ell_\mu(0) \Vert^2 + {\Ell_\mu}_*.
\]
As $\nabla_z \iota(\cdot,0) : \R^k \to \R^l$ is $L_\Ell'$-Lipschitz and $\|\cdot\|:\R^k\to \R$ is 1-Lipschitz we have that $\Vert \nabla_z \iota(\cdot,0) \Vert: \R^k \to \R$ is $L_\Ell'$-Lipschitz as well, so that by Lipschitz concentration we have that
\[
\nu\left(\left\{
x \in \R^k : \left\vert \Vert \nabla_z \iota(x,0) \Vert - \int \Vert \nabla_z \iota(\cdot,0) \Vert d\nu \right\vert > t
\right\}\right)
\leq 2e^{-\frac{c_\nu t^2}{L_\Ell'^2}},
\]
i.e., $\Vert \nabla_z \iota(x,0) \Vert$ distributed according to $\nu$ is a sub-Gaussian random variable. By \citet[Lemma~2.7.6]{Vershynin2018}, $\Vert \nabla_z \iota(x,0) \Vert^2$ is sub-exponential (or $1$-sub-Weibull), so that by \citet[Corollary~2.8.3]{Vershynin2018} (or Lemma~\ref{sub_weibull_bernstein}) we have (since $\int \Vert \nabla_z \iota(\cdot,0) \Vert^2 d\nu = \Vert \nabla \Ell_\nu(0) \Vert^2 \in \R$) that
\[
\left\vert \Vert \nabla \Ell_\mu(0) \Vert^2 - \Vert \nabla \Ell_\nu(0) \Vert^2 \right\vert \leq \epsilon_\Ell
\]
with probability at least
\[
1-2e^{-c_1 d \min\left( \frac{\epsilon_\Ell^2}{C_{\nu,L_\Ell'}^2}, \frac{\epsilon_\Ell}{C_{\nu,L_\Ell'}} \right)}
\]
for some constants $c_1,C_{\nu,L_\Ell'}$. Condition on this event as well. We then have
\[
\Vert \nabla \Ell_\mu(0) \Vert \leq \sqrt{\int \Vert \nabla_z \iota(\cdot,0) \Vert^2 d\nu + \epsilon_\Ell} = O_T(1)
\]
and
\[
\Ell_\mu(0) \leq \frac{1}{2\lambda_\Ell} \left( \int \Vert \nabla_z \iota(\cdot,0) \Vert^2 d\nu + \epsilon_\Ell \right) + \iota_* = O_T(1).
\]
We then have 
\[
\Ell_\mu(N_\mu(\theta_0)) = O_T(1),
\]
so that (noting that $\Ell_\mu(N_\mu(\theta_0))-{\Ell_\mu}_* \geq 0$, so $K_\Ell \geq 0$ by definition)
\[
K_\Ell := \sqrt{2 L_\Ell(\Ell_\mu(N_\mu(\theta_0))-{\Ell_\mu}_*)}
= O_T(1).
\]

Also, by Assumption~\ref{assumption_lambda_min}, with probability at least $1-\epsilon$ we have that 
\[
\lambda_{\min}(\partial N_\mu(\theta_0) {\partial N_\mu(\theta_0)}^*) = \Omega(1)
\]
where the implicit constant may depend on $d,J,k,l,\gamma_{1:J-1},\phi$. Combining all these events we have that with probability at least
\[
1- \epsilon_\lambda
- 4Je^{-\Omega_{\gamma_{1:J-1},k,l}(m)} 
- 2e^{-c_1 d \min\left( \frac{\epsilon_K^2}{K_{\nu,1}^2}, \frac{\epsilon_K}{K_{\nu,1}} \right)}
- 2e^{-c_J \min\left( \frac{d \epsilon_L^2}{C_{\nu,J}}, \left(\frac{d \epsilon_L}{C_{\nu,J}}\right)^{\frac{1}{J-1}} \right)}
-2e^{-c_1 d \min\left( \frac{\epsilon_\Ell^2}{C_{\nu,L_\Ell'}^2}, \frac{\epsilon_\Ell}{C_{\nu,L_\Ell'}} \right)}
\]
all previous estimates hold.

By the above, we have that there exists absolute constants $C_K,C_L,C_\Ell$ and $C_\lambda$ such that for $m$ sufficiently large we have $K_N \leq C_K$, $L_N \leq \frac{1}{\sqrt{m}} C_L$, $K_\Ell \leq C_\Ell$ and $\lambda_{\min}(\partial N_\mu(\theta_0) {\partial N_\mu(\theta_0)}^*) \geq C_\lambda$.

Now we are going to exploit lazy training, i.e., the fact that $L_N$ decreases proportionally with $\frac{1}{\sqrt{m}}$. By Lemma~\ref{lemma_lazy_training}, we have that $N_\mu$ is $\lambda_N$-UC on $D$ with
\[
\lambda_N \geq C_\lambda - 2 C_K C_L f(m),
\]
which is positive for large enough $m$ since $\lim_{m\to\infty} f(m)=0$. In fact, we can assume that $\lambda_N>C_\lambda/2$ for $m$ large enough (depending on the variables in $T$).

Thus we have the following estimates:
\[
K = K_N K_\Ell =O_T(1), \quad L=K_N^2 L_\Ell + K_\Ell L_N =O_T(1), \text{ and } \lambda = \lambda_N \lambda_\Ell =\Omega_T(1).
\]
Letting $\alpha \in (0,\frac{2}{L})$ we define
\[
q= 1 + L\lambda\alpha^2 - 2\lambda\alpha = O_{\alpha,T}(1).
\]

Therefore if we let $R = \frac{\alpha K}{1-\sqrt{q}}$ note that we have $R=O_{\alpha,T}(1)$. For the hypotheses of Theorem~\ref{theorem_gd_f_circ_F} to hold, we need that $\overline{B}(\theta_0,R) \subset D$, for which it is sufficient to have $R < \sqrt{m}f(m)$. By the above bounds and the fact $\lim_{i\to\infty} \sqrt{m}f(m)=\infty$, this clearly holds for sufficiently large $m$, and the proof is complete.
\end{proof}

Now we prove that the Lipschitz constant of the trained MLP is bounded.

\begin{proof}[Proof of Lemma~\ref{lemma_lip_const}]
By the definition of $N(x,\theta)$, we have that
\[
\Vert N(\cdot,\theta) \Vert_L \leq \left(\frac{\Vert \phi \Vert_L}{\sqrt{m}}\right)^{J-1} \prod_{j=1}^J \Vert A_j \Vert.
\]
We also have that $\Vert \theta_* - \theta_0 \Vert = O_{T,\alpha}(1)$. By this, Lemma~\ref{lemma_initial_params}, the triangle inequality and using property~\ref{it:3}, we have (denoting $\theta_*=(A_{*,1},b_{*,1},\cdots,A_{*,J},b_{*,J})$) that
\[
\|A_{*,i}\| =O_{T,\alpha}(\sqrt{m})
\]
for all $1\le i \le J-1$ and
\[
\|A_{*,J}\| =O_{T,\alpha}(1),
\]
so that
\[
\Vert N(\cdot,\theta_*) \Vert_L \leq \left(\frac{\Vert \phi \Vert_L}{\sqrt{m}}\right)^{J-1} \prod_{j=1}^J \Vert A_{*,j} \Vert = O_{T,\alpha}(1).
\]
\end{proof}

\section{Experiments}\label{app:experiments}

The experiments were implemented in the JAX framework \citep{Jax2018}. Table~\ref{experiments_table} contains the hyperparameter choices for each experiment. The row labeled \# contains the number of samples (of $\theta_0$ and $\mu$) taken to compute the expectations and standard deviations. The tempered Gaussian-error linear unit (GELU) activation is 
\[
\phi(s) 
= \frac{1}{t} \frac{1}{2} ts \left(\erf\left(\frac{ts}{\sqrt{2}}\right) + 1\right)
= \frac{1}{2} s \left(\erf\left(\frac{ts}{\sqrt{2}}\right) + 1\right)
\]
with $\erf$ being the Gauss error function. We have used the temperature parameter $t=16$ in all experiments. As data normalization we divided the pixel values of each sample by $\sqrt{k}$. In the experiment about generalization in Subsection~\ref{experiment_generalization}, the data generating distribution was a random subset of MNIST of size $16384$.

\begin{table}[h]
\caption{Hyperparameters of experiments.}
\label{experiments_table}
\vskip 0.15in
\begin{center}
\begin{small}
\begin{sc}
\begin{tabular}{lccccc}
\toprule
Hyperparameter & \makecell{Learning Rate \\ Transfer} & \makecell{Concentration of \\ $\lambda_{\min}$ of the NTK \\ at Initialization} & Lazy Training & \makecell{Implicit \\ Regularization} & \makecell{Generalization \\ Error} \\
\midrule
$\alpha$		& N/A & N/A & $0.1$ & $0.1$ & $0.1$ \\
$d$				& $64$ & $16$ & $16$ & $64$ & N/A \\
$\lambda_\Ell$	& $10^{-4}$ & N/A & $10^{-4}$ & $10^{-4}$ & $10^{-4}$ \\
\#				& $100$ & $10000$ & $100$ & $100$ & $100$ \\
\bottomrule
\end{tabular}
\end{sc}
\end{small}
\end{center}
\vskip -0.1in
\end{table}

\end{document}